\documentclass{article}

\usepackage{microtype}
\usepackage{graphicx}
\usepackage{subcaption}
\usepackage{booktabs} 
\usepackage{bm}

\usepackage{hyperref}

\usepackage[preprint]{icml2026}

\usepackage{amsmath}
\usepackage{amssymb}
\usepackage{mathtools}
\usepackage{amsthm}

\usepackage[capitalize,noabbrev]{cleveref}

\theoremstyle{plain}
\newtheorem{theorem}{Theorem}[section]

\newtheorem{lemma}[theorem]{Lemma}

\theoremstyle{definition}

\theoremstyle{remark}

\usepackage[textsize=tiny]{todonotes}

\begin{document}

\twocolumn[
  \icmltitle{Task-Aware Exploration via a Predictive Bisimulation Metric}



  \icmlsetsymbol{equal}{*}

  \begin{icmlauthorlist}
    \icmlauthor{Dayang Liang}{yyy}
    \icmlauthor{Ruihan Liu}{comp}
    \icmlauthor{Lipeng Wan}{sch}
    \icmlauthor{Yunlong Liu}{yyy}
    \icmlauthor{Bo An}{sch1}
  \end{icmlauthorlist}

  \icmlaffiliation{yyy}{Department of Automation, Xiamen University, Xiamen, China}
  \icmlaffiliation{comp}{Department of Computer Science, Beijing Normal-Hong Kong Baptist University, Zhuhai, China}
  \icmlaffiliation{sch}{School of Artificial Intelligence, Xian Jiaotong University, Xian, China}
  \icmlaffiliation{sch1}{College of Computing and Data Science, Nanyang Technological University, Singapore}
  \icmlcorrespondingauthor{Yunlong Liu}{ylliu@xmu.edu.cn}

  \icmlkeywords{Machine Learning, ICML}

  \vskip 0.3in
]



\printAffiliationsAndNotice{}  

\begin{abstract}
Accelerating exploration in visual reinforcement learning under sparse rewards remains challenging due to the substantial task-irrelevant variations. Despite advances in intrinsic exploration, many methods either assume access to low-dimensional states or lack task-aware exploration strategies, thereby rendering them fragile in visual domains. To bridge this gap, we present \textbf{TEB}, a \textbf{T}ask-aware \textbf{E}xploration approach that tightly couples task-relevant representations with exploration through a predictive \textbf{B}isimulation metric. Specifically, TEB leverages the metric not only to learn behaviorally grounded task representations but also to measure behaviorally intrinsic novelty over the learned latent space. To realize this, we first theoretically mitigate the representation collapse of degenerate bisimulation metrics under sparse rewards by internally introducing a simple but effective predicted reward differential. Building on this robust metric, we design potential-based exploration bonuses, which measure the relative novelty of adjacent observations over the latent space. Extensive experiments on MetaWorld and Maze2D show that TEB achieves superior exploration ability and outperforms recent baselines.
\end{abstract}

\section{Introduction}
Visual reinforcement learning (RL) has made significant progress in continuous control \cite{chen2024state,mu2025should,yanglearning,wang2025long}, highlighting the potential to learn decision policies directly from high-dimensional visual observations. However, in complex domains, particularly those with sparse or delayed rewards, it remains challenging to improve extrinsic-reward exploration and sample efficiency. To address this, prior work has developed intrinsic-motivation signals based on novelty or uncertainty, including forward prediction-error curiosity \cite{pathak2017curiosity}, Bayesian Surprise \cite{mazzaglia2022curiosity}, random network distillation \cite{burda2018exploration}, and ensemble disagreement, as well as density-based objectives such as kNN entropy estimation \cite{mutti2021task} and state-marginal matching \cite{lee2019efficient}. Nonetheless, these intrinsic rewards are often defined over representation spaces that are not rigorously aligned with task semantics \cite{pathak2017curiosity,yarats2021reinforcement}. They are therefore susceptible to nuisance variations such as texture, illumination, and background noise, which can steer behavior away from the core task \cite{bu2024closed}. In addition, unsupervised skill discovery methods \cite{yang2023behavior,bai2024constrained,sun2025unsupervised} expand coverage by maximizing skill discriminability or temporal reachability. Although they can provide coverage benefits in theory, they typically rely on accessible ground-truth states and incur substantial overheads, which limits their efficiency and scalability in practical visual environments.
\begin{figure}[t]
    \centering
    \includegraphics[width=\linewidth]{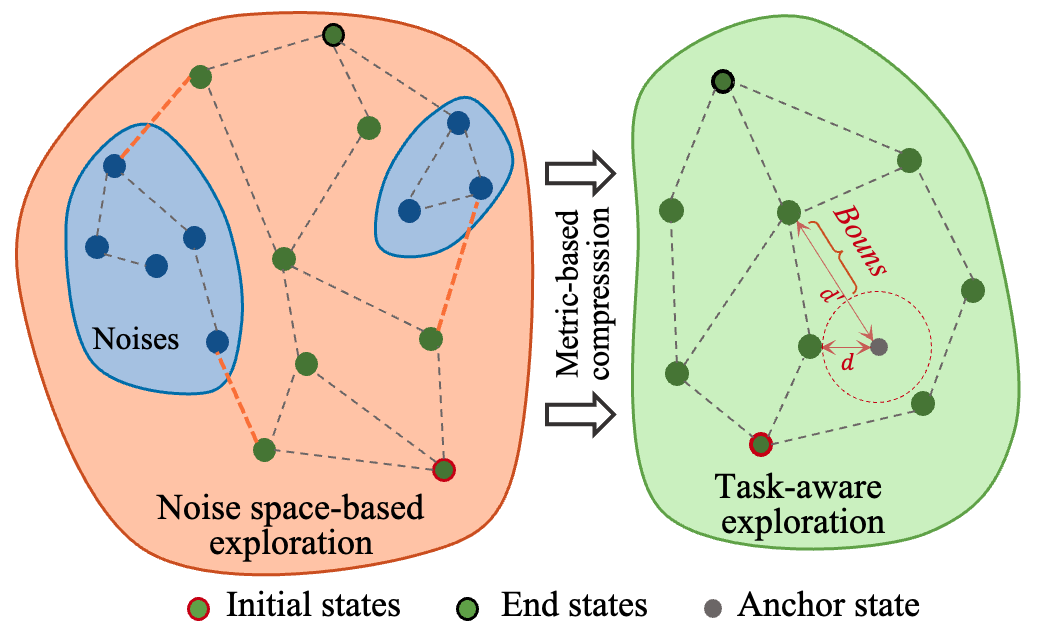}
    \caption{An illustration of existing explorations in noise space and our task-aware exploration. In complex tasks, existing explorations remain limited by task-irrelevant elements, resulting in risky transitions like the ``yellow line''. In contrast, the task-relevant space and exploration bonus built on the bisimulation metric can help visual RL complete tasks faster. The right figure specifically illustrates how to construct an exploration bonus that measures task-relevant novelty between a state relative to a global anchor state using the unified metric in a rigorous task space.}
    \label{fig_demo}
\vspace{-15pt}
\end{figure}

This motivates a central question: \textbf{can we develop a simple mechanism that both captures task structure and quantifies task-relevant exploration bonuses?} Bisimulation metrics provide a principled foundation for this goal. By combining differences in rewards and dynamic transitions, bisimulation and its variants measure behavioral equivalence between states \cite{ferns2004metrics, ferns2011bisimulation}, which have previously been used to learn mappings from high-dimensional observations to task-relevant latent spaces \cite{zhang2021learning,zang2024understanding}. In fact, the metric itself offers a task-aware notion of distance \cite{liu2023robust}, making it a natural candidate for constructing exploration bonuses. Moreover, existing theory shows that the state value difference is bounded by the bisimulation distance \cite{chen2022learning}, which suggests that larger bisimulation distances may correspond to larger disparities in long-term returns. These insights motivate a unified perspective in which the same bisimulation metric can underpin both representation learning and intrinsic-reward construction. Nevertheless, the core advantage of bisimulation metrics is vulnerable to sparse or delayed rewards, i.e., sparse rewards can easily lead to metric degradation and further representation collapse \cite{chen2022learning, liao2023policy}. Although prior work has attempted to mitigate this issue, for example by eliminating reward variance in the RAP distance \cite{chen2022learning}, these approaches remain unstable.

To address these issues, we propose \textbf{TEB}, a task-aware exploration approach built on a predictive bisimulation metric. The core idea is to keep the bisimulation metric non-degenerate and representations learnable under sparse rewards, while leveraging the same measurement principle to shape global task-guided exploration bonuses. We briefly summarize the insights in Figure \ref{fig_demo}. Concretely, we first introduce a reward predictor that produces Gaussian reward differentials to replace the extrinsic reward differentials used in conventional bisimulation metrics. This prediction-based reward can theoretically guarantee a positive metric radius between states even in the worst-case transitions, which mitigates representation collapse caused by zero-distance metrics. Building on this robust metric, we further construct task-aware intrinsic exploration bonuses. Specifically, we design a potential function \cite{Andrew1997policy} based on the metric and then compute relative novelty via potential differences to an approximately global anchor state. This design effectively encourages exploration toward task-relevant regions while preserving the optimal policy in theory.

We extensively evaluate TEB on MetaWorld \cite{yu2020meta} and Maze2D \cite{campos2020explore} environments. The results demonstrate that TEB achieves superior policy perfermance in challenging visual tasks and also outperforms powerful intrinsic exploration baselines on low-dimensional Maze2D tasks. Ablation studies confirm the effectiveness of TEB components. Our contributions are threefold.

(1) We propose a visual RL method that tightly couples task representations and task-aware exploration through a predictive bisimulation metric.

(2) We introduce a simple yet effective predictive bisimulation formulation that extends the metric-based representation to sparse-reward visual settings. 

(3) We construct a metric-based intrinsic reward that encourages policies to perform task-aware explorations in the representation space.

\section{Preliminaries}
We assume that the underlying environment is a discounted Markov decision process (MDP) defined by the tuple $\mathcal{M}=\langle \mathcal{S},\mathcal{A},\mathcal{P},r,\gamma\rangle$,
where $\mathcal{S}$ denotes the state space, $\mathcal{A}$ the action space,
$\mathcal{P}(s'| s,a)$ the state transition function that captures the probability from $s$ to $s'$ given action $a$, $r(s,a)\in\mathbb{R}$ the reward function, and $\gamma\in[0,1)$ the discount factor. A
policy $\pi(a|s)$ presents the probability when select $a$ conditioned on a state $s$. The value function $V^{\pi}:\mathcal{S}\rightarrow\mathbb{R}$ is defined as the expected cumulative discounted rewards, 
\begin{equation}
\label{eq:overall_objective}
V^{\pi}(s)=\mathbb{E}_{\substack{a_t\sim\pi\\ s_{t+1}\sim{\mathcal{P}}}}\Big[\sum_{t=0}^{\infty}\gamma^t r(s_t,a_t)|s_0=s\Big]
\end{equation}
The RL goal is to learn an optimal policy $\pi^*=\text{argmax}_\pi V^\pi$ that maximizes cumulative discounted rewards. In high-dimensional vision tasks, we also jointly learn an encoder $ \phi_\omega: \mathcal{S} \rightarrow \mathcal{Z}$ that maps high-dimensional observations $ s\in \mathcal{S}$ to a low-dimensional latent space $z\in\mathcal{Z}$, while retaining only task-relevant information for learning a parameterized policy $\pi(a|\phi_\omega(s))$.

\subsection{Bisimulation Metrics}
The bisimulation metric \cite{ferns2004metrics, ferns2011bisimulation} measures behavioral similarity between states by defining a pseudometric $d:\mathcal{S}\times\mathcal{S}\rightarrow\mathbb{R}_{\geq 0}$ over a reward differential term and a 1-Wasserstein distance in dynamics models. Theorem \ref{thm:41} is a policy-dependent bisimulation metric, which focuses on behaviors relative to a specific policy. 

\begin{theorem}[$\pi$-bisimulation metric \cite{castro2020scalable} ] \label{thm:41}
Given a MDP $\mathcal{M}$ and a fixed policy $\pi$, the following on-policy bisimulation metric exists and is unique:
\begin{equation}
\begin{aligned}
d^\pi(s_i,s_j)
\;&=\;
c_R\,\big|r^\pi_{s_i}-r^\pi_{s_j}\big|
\; \\
&+\;
c_T\,\mathcal{W}_1(d^\pi)\!\left(\mathcal{P}^\pi(\cdot| s_i),\,\mathcal{P}^\pi(\cdot| s_j)\right),
\end{aligned}
\label{eq2}
\end{equation}
where the $r^\pi_{s}=\mathbb{E}_{a\sim\pi}[r(s,a)]$, $\mathcal{P}_{s_i}^\pi=\mathbb{E}_{a\sim\pi}\mathcal{P}^\pi(\cdot| s_i)$ and $\mathcal{W}_1$ is the 1-Wasserstein distance. $c_R>0$ and $c_T\in[0,1)$ are the coefficients of the reward and transition terms of the metric, respectively. $d^{\pi}\left( s_{i},s_{j} \right)$ has a least fixed point $d_B^\pi$ and $d_B^\pi$ is a $\pi$-bisimulation metric.
\end{theorem}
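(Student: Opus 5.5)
The plan is the standard fixed-point argument. Let $\mathfrak{M}$ be the set of bounded pseudometrics on $\mathcal{S}$; assuming bounded rewards, $|r^\pi_s|\le R_{\max}$. Equip $\mathfrak{M}$ with the sup norm $\|d-d'\|_\infty=\sup_{s_i,s_j}|d(s_i,s_j)-d'(s_i,s_j)|$, which makes it a complete metric space, since uniform limits of bounded pseudometrics are again pseudometrics. Define the operator
\begin{equation*}
\mathcal{F}^\pi(d)(s_i,s_j)=c_R\big|r^\pi_{s_i}-r^\pi_{s_j}\big|+c_T\,\mathcal{W}_1(d)\big(\mathcal{P}^\pi(\cdot|s_i),\mathcal{P}^\pi(\cdot|s_j)\big).
\end{equation*}
A metric $d^\pi$ satisfying the displayed equation in Theorem~\ref{thm:41} is exactly a fixed point of $\mathcal{F}^\pi$. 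Existence and uniqueness will follow from Banach's theorem once $\mathcal{F}^\pi$ is shown to map $\mathfrak{M}$ into itself and to be a $c_T$-contraction.

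\textbf{Step 1: $\mathcal{F}^\pi(d)\in\mathfrak{M}$.} The reward term is a pseudometric because it is a pullback of the absolute value. For a pseudometric $d$, $\mathcal{W}_1(d)$ is a pseudometric on distributions: it is symmetric, vanishes on equal arguments, and satisfies the triangle inequality by the gluing lemma. For this we work on a Polish state space, or use the Kantorovich--Rubinstein dual form, where the triangle inequality is immediate. A nonnegative combination of pseudometrics is a pseudometric. Boundedness follows from $\mathcal{F}^\pi(d)\le 2c_R R_{\max}+c_T\|d\|_\infty$.

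\textbf{Step 2: contraction.} This is the main technical step. I would use the primal (coupling) form. Fix any coupling $\kappa$ of $\mathcal{P}^\pi(\cdot|s_i)$ and $\mathcal{P}^\pi(\cdot|s_j)$. Then
\begin{equation*}
\int d\,d\kappa\le\int d'\,d\kappa+\|d-d'\|_\infty .
\end{equation*}
Taking the infimum over $\kappa$ gives $\mathcal{W}_1(d)\le\mathcal{W}_1(d')+\|d-d'\|_\infty$, and the same bound holds with $d$ and $d'$ exchanged. The reward terms cancel, so
\begin{equation*}
\|\mathcal{F}^\pi(d)-\mathcal{F}^\pi(d')\|_\infty\le c_T\|d-d'\|_\infty ,
\end{equation*}
and $c_T<1$ makes this a contraction. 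Banach's theorem then gives a unique fixed point $d^\pi_B$, obtained as the limit of $\mathcal{F}^{\pi,(n)}(d_0)$ from any start $d_0$.

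\textbf{Step 3: least fixed point.} Start the iteration at $d_0\equiv 0$. The operator $\mathcal{F}^\pi$ is monotone, since $d\le d'$ pointwise implies $\mathcal{W}_1(d)\le\mathcal{W}_1(d')$. The iterates therefore increase to $d^\pi_B$, and any fixed point $d$ satisfies $0\le d$, hence $\mathcal{F}^{\pi,(n)}(0)\le d$ for all $n$. This shows $d^\pi_B$ is the least fixed point; by uniqueness it is also the only one.

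\textbf{Step 4: $d^\pi_B$ is a $\pi$-bisimulation metric.} Its kernel $\{d^\pi_B=0\}$ is an equivalence relation $E$. From the fixed-point equation, $d^\pi_B(s_i,s_j)=0$ forces $r^\pi_{s_i}=r^\pi_{s_j}$ and $\mathcal{W}_1(d^\pi_B)=0$. The latter means the two next-state distributions agree on $d^\pi_B$-closed sets, hence on $E$-equivalence classes. So $E$ is a $\pi$-bisimulation relation. I expect the only real obstacles to be measure-theoretic: the triangle inequality for $\mathcal{W}_1$ over a pseudometric, and measurability of $\mathcal{F}^\pi(d)$ on a continuous state space. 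For these I would cite the arguments of \cite{ferns2011bisimulation} and \cite{castro2020scalable} rather than redo them; on finite state spaces all of the above is elementary.
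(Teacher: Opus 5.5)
The paper does not prove this statement itself; it cites it from \cite{castro2020scalable}. The only related argument in the paper is the proof of Lemma~\ref{lemma:a1}, and that is exactly your Step~2: the reward terms cancel, the coupling form gives $|\mathcal{W}_1(d)-\mathcal{W}_1(d')|\le\|d-d'\|_\infty$, and Banach finishes. Your Steps~1--3 are correct and go beyond what the paper contains.

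The gap is in Step~4. In the standard usage \cite{ferns2004metrics,castro2020scalable}, ``$d^\pi_B$ is a $\pi$-bisimulation metric'' means its kernel is \emph{exactly} $\pi$-bisimilarity: $d^\pi_B(s_i,s_j)=0\iff s_i\sim_\pi s_j$. You only show that $\ker d^\pi_B$ is a $\pi$-bisimulation relation, i.e.\ $\ker d^\pi_B\subseteq{\sim_\pi}$. That inclusion alone says almost nothing. The discrete metric ($d(s,t)=1$ for $s\neq t$) has the identity relation as its kernel, which is trivially a $\pi$-bisimulation. The substantive direction, that $\pi$-bisimilar states are at distance zero, is never argued.

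The missing direction is easy to supply. Let $E'$ be any $\pi$-bisimulation relation and $(s,t)\in E'$. Then $r^\pi_s=r^\pi_t$. Since $\mathcal{P}^\pi(\cdot|s)$ and $\mathcal{P}^\pi(\cdot|t)$ put equal mass on every $E'$-class, gluing normalized class-wise product couplings gives a coupling $\kappa$ with $\kappa(E')=1$. Now run your iterates $d_0\equiv 0$, $d_{n+1}=\mathcal{F}^\pi(d_n)$. If $d_n$ vanishes on $E'$, then $d_{n+1}(s,t)=c_T\,\mathcal{W}_1(d_n)\big(\mathcal{P}^\pi(\cdot|s),\mathcal{P}^\pi(\cdot|t)\big)\le c_T\int d_n\,d\kappa=0$. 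Passing to the limit gives $d^\pi_B=0$ on $E'$. Alternatively, at the fixed point, $m:=\sup_{E'}d^\pi_B$ satisfies $m\le c_T m$, so $m=0$. Since $\sim_\pi$ is the union of all such $E'$, this gives ${\sim_\pi}\subseteq\ker d^\pi_B$.

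Your forward direction also has a hidden assumption: the step ``$d^\pi_B=0$ forces $\mathcal{W}_1(d^\pi_B)=0$'' uses $c_T>0$. The statement as written permits $c_T=0$, in which case $d^\pi_B=c_R|r^\pi_{s_i}-r^\pi_{s_j}|$. Its kernel is equality of expected rewards, which in general is not a $\pi$-bisimulation. So this part of the theorem really requires $c_T\in(0,1)$.
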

The central idea behind bisimulation metric-based representation learning is to embed task-relevant features that are associated with the abstract semantics of both rewards and transition dynamics, from high-dimensional observations into a compact latent space \cite{zhang2021learning}. This is typically achieved by minimizing the discrepancy between the latent distance  $\|\phi_\omega(s_i) - \phi_\omega(s_j)\| $ and the bisimulation distance $d^\pi(s_i, s_j)$ for any state pairs $s_i $ and $s_j$,
\begin{equation}
J_{\text{bisim}} = \mathbb{E} \left[ \left( \|\phi_\omega(s_i) - \phi_\omega(s_j)\| - d^{\pi}(s_i, s_j) \right)^2 \right]
\label{eq:on_policy_bisim}
\end{equation}
\section{Method}
In this section, we present task-aware exploration with a predictive bisimulation metric (TEB) in detail. The proposed method aims to tightly couple visual representation learning with intrinsic exploration by constructing a robust bisimulation metric. We begin by analyzing the limitations of conventional bisimulation metric-based representation learning under sparse rewards. We then introduce a predictive bisimulation metric that remains non-degenerate under sparse rewards and provide theoretical guarantees. Finally, we design a potential-based exploration bonus derived from this metric over the learned task-relevant space and prove its invariance with respect to the optimal policy.

\subsection{The Risk of Bisimulation Metrics}

Bisimulation metrics for learning a grounded task-relevant representation space hold great promise. To make them scalable, prior work introduces the policy-dependent bisimulation metric $d^\pi(s_i, s_j)$, which estimates behavioral similarity from transition samples collected under a fixed policy \cite{castro2020scalable}, making it practical and tractable. Despite this appeal, the metric remains challenging to scale in sparse reward settings. Intuitively, when $|r^{\pi}_{s_i}-r^{\pi}_{s_j}|$ is frequently near zero, the fixed-point recursion in Eq.~\eqref{eq2} aggressively contracts distances as the reward term provides negligible scale, and the Wasserstein term is discounted by $c_T<1$. Theoretically, existing analysis shows that the metric diameter is upper bounded by reward differentials.
\begin{lemma}[Diameter of $\mathcal{S}$ is bounded \cite{kemertas2021towards}] \label{lemma:5.1}
On-policy $\pi$-bisimulation metric ${d}^{\pi}:\mathcal{S}\times\mathcal{S}\rightarrow [0,\infty)$ has a fixed upper bound on the diameter of $\mathcal{S}$.
\begin{equation}
\mathrm{diam}(\mathcal{S};d^\pi)\le \frac{c_R}{1-c_T}\max_{s_i,s_j}\lvert r^\pi_{s_i}-r^\pi_{s_j}\rvert.
\end{equation}
\vspace{-10pt}
\end{lemma}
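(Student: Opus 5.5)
The plan is to work directly with the fixed-point characterization of $d^\pi$ from Theorem~\ref{thm:41}, writing $d_B^\pi$ as the least fixed point of the operator
\begin{equation*}
\mathcal{F}(d)(s_i,s_j)=c_R\,\lvert r^\pi_{s_i}-r^\pi_{s_j}\rvert + c_T\,\mathcal{W}_1(d)\big(\mathcal{P}^\pi(\cdot| s_i),\mathcal{P}^\pi(\cdot| s_j)\big).
\end{equation*}
Set $R:=\max_{s_i,s_j}\lvert r^\pi_{s_i}-r^\pi_{s_j}\rvert$ and $D:=\mathrm{diam}(\mathcal{S};d^\pi)=\sup_{s_i,s_j} d^\pi(s_i,s_j)$. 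The goal is to show $D\le c_R R/(1-c_T)$.

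The key step is a bound on the Wasserstein term by the diameter. For any two probability measures $\mu,\nu$ on $\mathcal{S}$ and any coupling $\kappa$ of them, we have $\int d^\pi(x,y)\,d\kappa(x,y)\le \sup_{x,y} d^\pi(x,y)=D$. Taking the infimum over couplings (or simply using the product coupling) gives $\mathcal{W}_1(d^\pi)(\mu,\nu)\le D$. Plugging this into the fixed-point equation gives, for every pair,
\begin{equation*}
d^\pi(s_i,s_j)\le c_R R + c_T D.
\end{equation*}
Taking the supremum over pairs yields $D\le c_R R + c_T D$. Since $c_T<1$, rearranging gives the claim, provided $D<\infty$.

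The main obstacle is justifying finiteness of $D$; otherwise the rearrangement is vacuous. I would handle this by constructing $d_B^\pi$ via value iteration: start from $d_0\equiv 0$ and set $d_{n+1}=\mathcal{F}(d_n)$. I would show by induction that $\sup d_n\le c_R R\sum_{k=0}^{n-1}c_T^k$, using the same coupling bound at each step with $\sup d_n$ in place of $D$. Since $\mathcal{F}$ is monotone and a $c_T$-contraction in sup norm, $d_n$ increases to the least fixed point $d_B^\pi$. The uniform bound $c_R R/(1-c_T)$ therefore passes to the limit. This argument both establishes finiteness and gives the bound directly, so the rearrangement step becomes optional.

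Measurability and existence of couplings are routine: Theorem~\ref{thm:41} already assumes the setting in which $\mathcal{W}_1(d)$ is well defined.
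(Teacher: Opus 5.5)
Your proof is correct. The paper gives no proof of its own for this lemma (it imports it from \cite{kemertas2021towards}), and your argument is the standard one for this bound: bound $\mathcal{W}_1(d^\pi)$ by the diameter via any coupling, rearrange $D\le c_R R+c_T D$ using $c_T<1$, and secure $D<\infty$ by iterating from $d_0\equiv 0$ with the geometric-series bound. The same mechanism appears in the paper's upper bound for Theorem~\ref{thm:53} (Appendix~\ref{Appendix_a2}) and its $d_0\equiv 0$ induction in Appendix~\ref{Appendix_a3}; your sup-based version is fully rigorous there, including the monotonicity argument that identifies the limit of the iterates with the least fixed point.
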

Lemma~\ref{lemma:5.1} suggests that the bisimulation metric is theoretically dependent on the inner reward differential term. 
If the immediate reward signal remains within a negligible range over a long period, for instance, $ r^\pi_s \equiv 0 $, then the bisimulation metric degenerates to the trivial solution $ \operatorname{diam}(\mathcal{S}, d^\pi) = 0$, which in turn induces an erroneous collapse of task-relevant distances in the learned latent representation. 
\subsection{The Proposed Predictive Bisimulation Metric}
\textbf{Predictive Bisimulation Metric.}~To prevent representation collapse caused by zero-distance metric under sparse reward settings, we propose the robust predictive bisimulation metric by replacing the reward differential with a predictive reward differential $\Delta^\pi_R(s_i,s_j) $. Our predictive bisimulation metric operator on ground-truth states is defined by
\begin{equation}
\begin{aligned}\label{eq:predictive_operator}
(\widehat{\mathcal{T}}^\pi d_{\text{pre}})(s_i,s_j) &= c_R\,\Delta^\pi_R(s_i,s_j) \\
& \,\,+ c_T\,\mathcal{W}_1(d_{\text{pre}})\big(\widehat {\mathcal P}^\pi(\cdot|s_i),\widehat {\mathcal P}^\pi(\cdot|s_j)\big) 
\end{aligned}
\end{equation}
with
\begin{equation}
\Delta^\pi_R(s_i,s_j)
\;=\;
\mathbb{E}_{a_i,a_j\sim\pi}\Big[\,\big| \hat r_\theta(s_i,a_i) - \hat r_\theta(s_j,a_j)\big|\,\Big],
\label{eq:reward_discrepancy}
\end{equation}
where $c_R>0$ and $c_T\in[0,1)$. $\hat r_\theta(s,a)$ denotes a random variable sampled from the predicted reward distribution $p_\theta(\cdot| s,a)$ with parameters $\theta$ under the action induced by a specific policy. $\widehat {\mathcal P}^\pi=\mathbb{E}_{a\sim\pi}\mathcal{\widehat P}(s_i,a)$ is a learned policy-marginal transition model. When $\hat r_\theta(s,a)=r(s,a)$ and $\widehat {\mathcal P}^\pi={\mathcal P}^\pi$, Eq.\,(\ref{eq:predictive_operator}) reduces to the standard on-policy bisimulation metric.

\begin{lemma}[Convergence and Fixed Point] \label{lemma:5.2}
Predictive bisimulation metric $\widetilde{d}_{\text{pre}}$ is a contraction mapping w.r.t the $L^\infty$ norm on $\mathbb{R}^{\mathcal{S}\times\mathcal{S}}$, and there exists a fixed-point $\widetilde{d}_{\text{pre}}$.
\end{lemma}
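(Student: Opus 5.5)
We will show that the operator $\widehat{\mathcal{T}}^\pi$ of Eq.~\eqref{eq:predictive_operator} is a $c_T$-contraction on the space of bounded functions $d:\mathcal{S}\times\mathcal{S}\to\mathbb{R}$ under the sup norm. The existence and uniqueness of the fixed point $\widetilde{d}_{\text{pre}}$ then follow from the Banach fixed-point theorem, exactly as in the standard argument behind Theorem~\ref{thm:41}.

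\textbf{Well-definedness.} First we check that $\widehat{\mathcal{T}}^\pi$ maps the relevant space into itself. The reward term $c_R\Delta^\pi_R(s_i,s_j)$ does not depend on $d$. It is bounded provided the predicted reward distributions $p_\theta(\cdot|s,a)$ have uniformly bounded first moments, which we assume; for instance, Gaussians with bounded mean and variance give $\Delta^\pi_R\le 2\sup_{s,a}\mathbb{E}|\hat r_\theta(s,a)|<\infty$. To apply $\mathcal{W}_1(d)$ we also need $d$ to be a pseudometric, or at least a nonnegative symmetric function. Accordingly, we work on the closed subset $\mathcal{M}$ of bounded, nonnegative, symmetric functions with zero diagonal and satisfying the triangle inequality, for which $\mathcal{W}_1(d)$ is the usual Kantorovich–Rubinstein distance. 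Closedness of $\mathcal{M}$ under uniform limits is immediate, so $(\mathcal{M},\|\cdot\|_\infty)$ is complete.

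One subtlety needs attention here. $\Delta^\pi_R(s,s)=\mathbb{E}|\hat r_\theta(s,a)-\hat r_\theta(s,a')|$ is generally positive when $\hat r_\theta$ is stochastic, so the image need not have zero diagonal. We will therefore drop the zero-diagonal requirement and work with bounded nonnegative symmetric functions satisfying the triangle inequality. The triangle inequality for $\Delta^\pi_R$ follows by coupling independent samples and applying $|x-z|\le|x-y|+|y-z|$ inside the expectation. For $\mathcal{W}_1(d)$ it is standard via gluing of couplings. Alternatively, we can define $\mathcal{W}_1(d)$ directly through couplings, $\inf_{\mu\in\Gamma}\mathbb{E}_\mu[d(x,y)]$, which requires only that $d$ be measurable and bounded. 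This avoids needing the metric axioms at all for the contraction step.

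\textbf{Contraction.} For $d,d'$ in the space, the reward terms cancel in $\widehat{\mathcal{T}}^\pi d-\widehat{\mathcal{T}}^\pi d'$. Using the coupling form, for any coupling $\mu$ of $\widehat{\mathcal P}^\pi(\cdot|s_i)$ and $\widehat{\mathcal P}^\pi(\cdot|s_j)$ we have $\mathbb{E}_\mu d\le \mathbb{E}_\mu d'+\|d-d'\|_\infty$. Taking the infimum over $\mu$ gives $\mathcal{W}_1(d)\le\mathcal{W}_1(d')+\|d-d'\|_\infty$, and the same bound holds with $d$ and $d'$ swapped. Hence
\[
\big|(\widehat{\mathcal{T}}^\pi d)(s_i,s_j)-(\widehat{\mathcal{T}}^\pi d')(s_i,s_j)\big|\le c_T\|d-d'\|_\infty ,
\]
and taking the supremum over $(s_i,s_j)$ yields $\|\widehat{\mathcal{T}}^\pi d-\widehat{\mathcal{T}}^\pi d'\|_\infty\le c_T\|d-d'\|_\infty$ with $c_T<1$. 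Banach's theorem then gives a unique fixed point $\widetilde{d}_{\text{pre}}$, with iterates $(\widehat{\mathcal{T}}^\pi)^n d_0\to\widetilde{d}_{\text{pre}}$ at geometric rate $c_T^n$ from any $d_0$, for example $d_0\equiv0$.

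\textbf{Main obstacle.} The delicate step is not the contraction itself but the function-space bookkeeping. We must ensure that $\mathcal{W}_1(d)$ is well-defined and Lipschitz in $d$ for a class of $d$ that the operator preserves and that is complete. The nonzero diagonal of $\Delta^\pi_R$ is the specific point to handle carefully. The coupling formulation of $\mathcal{W}_1$ is our first choice because it sidesteps these issues. For continuous $\mathcal{S}$ one also needs measurability, for example by restricting to continuous bounded $d$ as in \cite{ferns2011bisimulation}. We would state these assumptions explicitly rather than prove them in full generality.
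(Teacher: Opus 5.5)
Your proposal is correct and follows the same route as the paper. The $d$-independent reward term $c_R\Delta_R^\pi$ cancels, and the coupling (primal) form of $\mathcal{W}_1$ gives $|\mathcal{W}_1(d)-\mathcal{W}_1(d')|\le\|d-d'\|_\infty$, so $\widehat{\mathcal{T}}^\pi$ is a $c_T$-contraction and Banach's theorem yields the unique fixed point. Your extra bookkeeping (bounded first moments of $\hat r_\theta$, and dropping the zero-diagonal requirement because $\Delta_R^\pi(s,s)>0$ for a stochastic predictor) is a genuine improvement: it specifies the complete, operator-invariant space that the paper leaves implicit when it writes $d,d'\in\mathcal{M}$.
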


The proof of Lemma \ref{lemma:5.2} is available in Appendix \ref{Appendix_a1}. This provides a theoretical convergence property for bootstrapped metric and representation learning based on the predictive bisimulation metric.

\textbf{Gaussian Reward Predictor.}~ We assume that the external reward is Gaussian distributed. Then, let $p_\theta(\cdot|s_t,a_t)$  be a Gaussian predictor, which outputs a distribution probabilistic conditioned on the current state and action. We model it on the latent space $\mathcal{Z}$ by,
\begin{equation}
p_\theta( r_t | s_t,a_t;\omega)
=\mathcal{N}\!\big(\mu_\theta(z_t,a_t),\sigma^2_\theta(z_t,a_t)\big)
\label{eq:reward_dist}
\end{equation}
where the mean $\mu_\theta$ and the variance $\sigma_\theta$ are learned networks, as well as the latent state $z_t=\phi_\omega(s_t)$. We train the reward predictor by minimizing the negative log-likelihood on replay data,
\begin{align}
\mathcal{L}_{\text{rew}}(\theta)
= & -\mathbb{E}_\mathcal{D} \Big[\log p_\theta(r_t| s_t, a_t;\omega)\Big]
\label{eq:reward_nll} \\
= & \; \mathbb{E}_\mathcal{D} \Bigg[
\frac{(r_t - \mu_\theta(z_t, a_t))^2}{2\sigma_\theta(z_t, a_t)^2}
+ \frac{1}{2} \log \sigma_\theta(z_t, a_t)^2
\Bigg]. \notag
\end{align}
In training process, the target rewards are the multi-step Monte Carlo return, i.e., $r_t=\sum_{k=0}^{N-1}\gamma^k r_{t+k}$, and the variance $\sigma_\theta$ is scaled to $[\sigma_\text{min},\sigma_\text{max}]$, where $\sigma_\text{min}$, $\sigma_{\text{max}}$ and $N$ detailed in the Appendix \ref{parameter}.

The learned predictor $p_\theta(\cdot|s_t,a_t;\omega)$ leverages $\mu_\theta$ to estimate the conditional expectation of the target $r_t$, and leverages $\sigma_\theta$ to characterize the scale of the stochastic residual under the given state and policy. By reparameterization, a predictive sample can be written as $\hat r_\theta(s_t,a_t)=\mu_\theta+\sigma_\theta\odot\xi$ with $\xi\sim\mathcal{N}(0,I)$, and hence admits the decomposition $\hat r_\theta(s_t,a_t)=r_t+\varepsilon_t$ where $\varepsilon_t$ collects the remaining approximation error $\mu_\theta-r_t$ and the stochastic component induced by $\sigma_\theta$. Due to the optimized tradeoff of the variance term in the denominator (in $\mathcal{L}_{\text{rew}}$) and the enforced lower bound $\sigma_{\min}$, it implies a positive variance w.r.t $\sigma_\theta$, especially relevant in early learning, which thereby supports the \textcolor{blue}{variance} assumption required by Theorem~\ref{thm:53}.

\textbf{Non-degenerate Theoretical Analysis.}~Next, we leverage Theorem~\ref{thm:53} to show that the bisimulation metric with the predictive reward yields a non-degenerate distance, thereby offering theoretical justification for learning a robust representation space under sparse reward conditions.

\begin{theorem}[Predictive Reward Prevents Degenerate Metric]\label{thm:53} Let the state space $\mathcal{S}$ be compact and the policy $\pi$ fixed. 
Consider the predictive bisimulation operator $\widehat{\mathcal{T}}^\pi$, the learned reward model satisfies
$\hat r_\theta(s_t,a_t)=r_t+\varepsilon_t$ with $\mathrm{Var}[\varepsilon_t|s_t,a_t]=\Sigma > 0$. 
Then, the unique fixed point $\widetilde d_{\text{pre}} = \widehat{\mathcal{T}}^\pi \widetilde d_{\text{pre}}$ 
admits a strictly positive expected diameter in the sparse-reward region 
$\mathcal{S}_0 = \{s| r(s,a)=0\}$:
\begin{equation}
\mathbb{E}_{\mathcal{D}}\big[\mathrm{diam}(\mathcal{S}_0; \widetilde d_{\text{pre}})\big]
\in
\Big(
0,\frac{c_R}{1-c_T}\cdot \frac{2}{\sqrt{\pi}}\sqrt{\Sigma}
\Big].
\label{eq:predictive_diameter_bound}
\end{equation}

\end{theorem}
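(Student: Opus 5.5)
The plan has three parts: a lower bound from the reward term alone, an upper bound obtained by mimicking Lemma~\ref{lemma:5.1}, and existence and uniqueness of the fixed point taken from Lemma~\ref{lemma:5.2}. Lemma~\ref{lemma:5.2} gives that $\widehat{\mathcal{T}}^\pi$ is a $c_T$-contraction in $L^\infty$. Hence $\widetilde d_{\text{pre}}$ exists, is unique, and is the limit of the iterates $d_{k+1}=\widehat{\mathcal{T}}^\pi d_k$ from $d_0\equiv 0$. Every iterate is nonnegative, and $\mathcal{W}_1$ of a nonnegative pseudometric is nonnegative. So at the fixed point
\[
\widetilde d_{\text{pre}}(s_i,s_j)\ \ge\ c_R\,\Delta^\pi_R(s_i,s_j)
\]
for all pairs. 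The lower bound then reduces to showing that $\Delta^\pi_R$ is strictly positive for some pair in $\mathcal{S}_0$. For the upper bound I will bound $\sup_{s_i,s_j\in\mathcal{S}_0}\Delta^\pi_R$ and propagate it through the recursion.

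\textbf{Computing the reward term on $\mathcal{S}_0$.} For $s_i,s_j\in\mathcal{S}_0$ we have $r=0$, so $\hat r_\theta(s_i,a_i)-\hat r_\theta(s_j,a_j)=\varepsilon_i-\varepsilon_j$. The two samples are drawn independently under the reparameterization $\hat r_\theta=\mu_\theta+\sigma_\theta\xi$. In the idealized regime the theorem describes, with error centred and variance $\Sigma$, the difference is $\mathcal{N}(0,2\Sigma)$. Using $\mathbb{E}|X|=\sigma\sqrt{2/\pi}$ for $X\sim\mathcal{N}(0,\sigma^2)$, this gives
\[
\Delta^\pi_R(s_i,s_j)=\sqrt{2\Sigma}\sqrt{2/\pi}=\tfrac{2}{\sqrt{\pi}}\sqrt{\Sigma}.
\]
This value is strictly positive whenever $\Sigma>0$, and in particular it is positive even for $s_i=s_j$. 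Combined with the lower bound above, $\mathbb{E}_\mathcal{D}[\mathrm{diam}(\mathcal{S}_0;\widetilde d_{\text{pre}})]\ge c_R\frac{2}{\sqrt\pi}\sqrt\Sigma>0$, which gives the left end of the interval.

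\textbf{Upper bound.} I would follow the proof of Lemma~\ref{lemma:5.1}. Set $M=\sup_{s_i,s_j}\Delta^\pi_R(s_i,s_j)$. Since $\mathcal{W}_1(d)\le\|d\|_\infty$, induction on the iterates gives
\[
\|d_{k+1}\|_\infty\le c_R M+c_T\|d_k\|_\infty ,
\]
so $\|\widetilde d_{\text{pre}}\|_\infty\le c_R M/(1-c_T)$. Substituting $M=\frac{2}{\sqrt\pi}\sqrt\Sigma$ gives exactly the stated right endpoint.

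\textbf{Main obstacle.} The difficulty is that the Wasserstein term couples $\mathcal{S}_0$ to the states it transitions to. If successor states lie outside $\mathcal{S}_0$, their reward differences $|r_i-r_j|$ enter, and the supremum $M$ is then not just $\frac{2}{\sqrt\pi}\sqrt\Sigma$. I would handle this with the sparse-reward reading of the statement: assume transitions under $\pi$ from $\mathcal{S}_0$ stay in $\mathcal{S}_0$, or equivalently restrict the operator to the invariant region. The recursion then closes on $\mathcal{S}_0\times\mathcal{S}_0$, and the sup-norm argument above applies there. A second, minor point is that $\varepsilon_t$ need not be centred, since $\mu_\theta-r_t\neq 0$. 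I would absorb the bias either by treating it as part of the conditional expectation, or by invoking Jensen: $\mathbb{E}|X|\le\sqrt{\mathbb{E}X^2}$ with the variance interpretation, taking the expectation over $\mathcal{D}$ to average the sample-dependent errors.
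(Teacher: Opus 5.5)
Your lower bound and Gaussian computation match the paper's; the routes differ at the upper bound, and your version is the more rigorous one.

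\textbf{What is shared.} The paper also drops the nonnegative Wasserstein term from the fixed-point equation. It also gets $\frac{2}{\sqrt{\pi}}\sqrt{\Sigma}$ by declaring $\varepsilon_i-\varepsilon_j\sim\mathcal{N}(0,2\Sigma)$ under conditional independence. That is the same centred-Gaussian idealization you adopt.

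\textbf{How the paper bounds from above.} It never takes a supremum. It averages the fixed-point equation over pairs $(s_i,s_j)\sim\mathcal{D}$ in $\mathcal{S}_0\times\mathcal{S}_0$ and bounds $\mathcal{W}_1$ by the product coupling. It then asserts
\[
\mathbb{E}_{\mathcal{D}}\big[\mathcal{W}_1(\widetilde d_{\text{pre}})(\mathcal{P}^\pi(\cdot|s_i),\mathcal{P}^\pi(\cdot|s_j))\big]\le\mathbb{E}_{\mathcal{D}}\big[\widetilde d_{\text{pre}}(s_i,s_j)\big],
\]
which rearranges to $(1-c_T)\,\mathbb{E}_{\mathcal{D}}[\widetilde d_{\text{pre}}]\le c_R\,\mathbb{E}_{\mathcal{D}}[\Delta^\pi_R]$. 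Accordingly, it reads $\mathrm{diam}(\mathcal{S}_0;\widetilde d_{\text{pre}})$ as the $\mathcal{D}$-expected pairwise distance.

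\textbf{Where your main obstacle hides in the paper.} The asserted inequality holds only if successor pairs drawn from the product coupling are again distributed like $\mathcal{D}$ on $\mathcal{S}_0\times\mathcal{S}_0$. That requires invariance of $\mathcal{S}_0$ plus a stationarity property of $\mathcal{D}$, neither of which is stated. Your sup-norm recursion needs only invariance and says so explicitly. It bounds the actual diameter pointwise, which implies the averaged statement.

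Some such hypothesis cannot be avoided. Suppose $s_i\in\mathcal{S}_0$ moves deterministically to a state with reward $R>0$, while $s_j\in\mathcal{S}_0$ is absorbing. Then $\widetilde d_{\text{pre}}(s_i,s_j)\ge c_Tc_R R$, which exceeds the stated endpoint once $R$ is large.

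\textbf{What each route buys.} If its step were justified, the averaged route would bound through $\mathbb{E}_{\mathcal{D}}[\Delta^\pi_R]$ rather than $\sup\Delta^\pi_R$. That matters only when $\Delta^\pi_R$ is not constant on $\mathcal{S}_0$. Your route pins the fixed point down exactly. On an invariant $\mathcal{S}_0$ with $\Delta^\pi_R\equiv\frac{2}{\sqrt{\pi}}\sqrt{\Sigma}$, the constant $\frac{c_R}{1-c_T}\frac{2}{\sqrt{\pi}}\sqrt{\Sigma}$ is a fixed point of the restricted operator, hence the unique one by contraction. So the upper endpoint is attained by every pair, $s_i=s_j$ included.

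\textbf{One caution.} Your Jensen fallback for a biased $\varepsilon$ gives only $\mathbb{E}|\varepsilon_i-\varepsilon_j|\le\sqrt{2\Sigma+b^2}$, where $b$ is the bias gap. That already exceeds $\frac{2}{\sqrt{\pi}}\sqrt{\Sigma}$ at $b=0$. So the stated right endpoint genuinely needs the centred, independent Gaussian reading.
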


The proof of Lemma \ref{thm:53} is available in Appendix \ref{Appendix_a2}. This result formally shows that the predictive bisimulation employed in TEB preserves a nonzero geometric scale within the latent space. While the classical $\pi$-bisimulation metric collapses in sparse-reward environments because the reward differential $|r^{\pi}_{s_i} - r^{\pi}_{s_j}|$ vanishes, TEB’s predictive reward model introduces a stochastic residual $\varepsilon_t$ with finite variance $\Sigma$. This residual induces a positive expected separation between latent states, thereby preventing the bisimulation metric from degenerating into a zero distance. Intuitively, the predictor’s uncertainty injects a minimal “energy floor” into the metric-based representation learning, which ensures that the encoder $\phi_\omega$ continues to receive meaningful gradient signals and that the learned latent manifold remains well-structured even when external rewards are sparse or absent. In practice, this mechanism directly explains TEB’s empirical stability and strong generalization under sparse rewards.

Besides this, another situation also empirically favors the non-degenerate advantage of our predictive bisimulation metrics. Specifically, when individual rewards occur, the reward predictor can also leverage the network's inductive and generalization abilities to ensure that many similar transitions generate positive rewards, thus enabling the sampled $\Delta^\pi_R(s_i,s_j)>0$. This advantage mainly occurs in the early stages of training because the predictor has not yet overfitted. Nonetheless, the above is an empirical advantage we speculate on and do not focus on.

\textbf{Training Loss.}~Let $\hat d_{\omega}(s_i,s_j)$ be the approximated bisimulation metric distance $||\phi_\omega(s_i) - \phi_\omega(s_j)||$ between $s_i$ and $s_j$ parameterized by $\omega$. Finally, we train the encoder by a bootstrapped bisimulation regression loss:
\begin{equation}
\mathcal{L}_{\text{bisim}}(\omega)=\mathbb{E}_{s_i,s_j\sim\mathcal{D}}\Big[\hat d_{\omega}(s_i,s_j)-\mathrm{sg}[\widehat{\mathcal{T}}^\pi d_{\text{pre}}](s_i,s_j;\bar\omega)\big)^2\Big],
\label{eq:bisim_loss}
\end{equation}
where $\mathrm{sg}[\cdot]$ denotes stop gradients with the frozen parameters, e.g. $\bar\omega$ and $\bar\theta$. In addition, we need to learn a dynamic transition model through common loss structure $\mathcal{L}_{\text{dyn}}$  following Chen et al. \cite{chen2022learning}. The overall representation objective is $\mathcal{L}_{\text{rep}}=\lambda_b\mathcal{L}_{\text{bisim}}+\lambda_r\mathcal{L}_{\text{rew}}+\lambda_p\mathcal{L}_{\text{dyn}}$, where the coefficients referenced from previous work \cite{chen2022learning} and detailed in the Appendix \ref{parameter}.

\subsection{Metric-based Intrinsic Exploration}
\textbf{Link to Intrinsic Exploration.}~ The definition of the bisimulation metric directly indicates that it can measure behavioral similarity by constructing a distance based on the difference in rewards and the dynamic transition distribution between two states. This inspires us to utilize the property directly to measure the behavioral distance relevant to the task in the latent space $\mathcal{Z}$, and then to carefully design an intrinsic bonus based on this distance that can facilitate the exploration of reinforcement learning. Furthermore, an important characteristic of the connection between the bisimulation metrics and the value function further suggests the above insight.

\begin{theorem}[Value Difference Bound] \label{theorem:4.4}
Given states $s_i$ and $s_j$, and a policy $\pi$, let $V(s_t)$ be the state value function over policy $\pi$, we have,
\begin{equation}
\label{eq7}
\begin{matrix}
\left| V^{\pi}\left( s_{i} \right) - V^{\pi}\left( s_{j} \right) \right|
\end{matrix} \leq \widetilde{d}_{pre}\left( s_{i},s_{j} \right)
\end{equation}
\end{theorem}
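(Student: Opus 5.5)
The plan is the standard coinduction argument for value-difference bounds (as in Ferns et al.\ and Kemertas \& Aumentado-Armstrong). We run value iteration for $V^\pi$ and show that every iterate is Lipschitz with respect to $\widetilde d_{pre}$. To make the statement hold literally, we need the implicit normalization used throughout this literature: $c_R \ge 1$ (or $V^\pi$ measured in units of $c_R$ rewards) and $c_T \ge \gamma$.

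We also need to relate the predictive reward differential to the true one. By Jensen's inequality, $\Delta^\pi_R(s_i,s_j) \ge |\mathbb{E}[\hat r_\theta(s_i,a_i)] - \mathbb{E}[\hat r_\theta(s_j,a_j)]|$. If the predictor is unbiased in the sense of Theorem~\ref{thm:53}, i.e.\ $\mathbb{E}[\varepsilon_t\mid s_t,a_t]=0$, the right-hand side equals $|r^\pi_{s_i}-r^\pi_{s_j}|$. Likewise we assume $\widehat{\mathcal P}^\pi = \mathcal P^\pi$, or else regard $V^\pi$ as the value in the learned model. These modeling assumptions should be stated explicitly at the start of the proof.

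The induction runs as follows.
\begin{itemize}
\item Set $V_0 \equiv 0$ and $V_{n+1}(s) = r^\pi_s + \gamma\,\mathbb{E}_{s'\sim\mathcal P^\pi(\cdot|s)} V_n(s')$, so that $V_n \to V^\pi$ uniformly.
\item Hypothesis: $|V_n(x)-V_n(y)| \le \widetilde d_{pre}(x,y)$ for all $x,y$, i.e.\ $V_n$ is $1$-Lipschitz with respect to $\widetilde d_{pre}$. The base case is trivial.
\item Inductive step: by the triangle inequality,
\[
|V_{n+1}(s_i)-V_{n+1}(s_j)| \le |r^\pi_{s_i}-r^\pi_{s_j}| + \gamma\,\bigl|\mathbb{E}_{\mathcal P^\pi(\cdot|s_i)}V_n - \mathbb{E}_{\mathcal P^\pi(\cdot|s_j)}V_n\bigr|.
\]
\item Since $V_n$ is $1$-Lipschitz with respect to $\widetilde d_{pre}$, Kantorovich--Rubinstein duality bounds the second difference by $\mathcal W_1(\widetilde d_{pre})(\mathcal P^\pi(\cdot|s_i),\mathcal P^\pi(\cdot|s_j))$.
\item Using $c_R\ge1$, $\gamma\le c_T$ and the Jensen bound, the sum is at most $c_R\Delta^\pi_R + c_T\mathcal W_1(\widetilde d_{pre}) = (\widehat{\mathcal T}^\pi\widetilde d_{pre})(s_i,s_j) = \widetilde d_{pre}(s_i,s_j)$. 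The last equality uses the fixed-point property from Lemma~\ref{lemma:5.2}.
\item Passing to the limit $n\to\infty$ gives the claim.
\end{itemize}

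The main obstacle is not the induction but the reconciliation of the predictive operator with the true MDP. The Jensen step is what makes the absolute value inside the expectation in $\Delta^\pi_R$ harmless: it only enlarges the distance. However, it requires the predictor's mean to match $r^\pi$. The predictor in the paper is trained on $N$-step returns, so its mean targets $\sum_{k}\gamma^k r_{t+k}$ rather than $r_t$. In that case I would either define $V^\pi$ with respect to the predicted mean reward, or add an error term $\frac{c_R}{1-\gamma}\sup|\mu_\theta - r|$ together with a transition-model error term. I would first try the clean version under exact-model assumptions, matching how the theorem is stated, and note the approximate version as a remark.

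A secondary technical point is that applying Kantorovich duality with the pseudometric $\widetilde d_{pre}$ requires measurability and compactness of $\mathcal S$. This is given by the compactness assumption in Theorem~\ref{thm:53}, and I would invoke it there.
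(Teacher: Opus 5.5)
Your proposal is correct and is essentially the paper's argument: value iteration from $V_0\equiv 0$, Jensen's inequality on $\Delta_R^\pi$ under $\mathbb{E}[\hat r_\theta(s,a)]=r(s,a)$, the coupling bound $|\mathbb{E}_{\mu}f-\mathbb{E}_{\nu}f|\le\mathcal{W}_1(d)(\mu,\nu)$ for $1$-Lipschitz $f$, and a limit $n\to\infty$. The one structural difference is that you induct against the fixed point $\widetilde d_{pre}$ itself, whereas the paper inducts jointly on the metric iterates $d_0\equiv 0$, $d_{n+1}=\widehat{\mathcal T}^\pi d_n$ and passes both sequences to the limit; the conditions you flag ($c_R\ge 1$, $c_T\ge\gamma$, $\widehat{\mathcal P}^\pi=\mathcal P^\pi$) are exactly what the paper uses silently (its appendix fixes $c_R=1$, $c_T=\gamma$ and writes the operator with $\mathcal P^\pi$), and since only that elementary coupling direction of Kantorovich--Rubinstein is needed, no compactness of $\mathcal S$ is required.
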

The proof of Theorem \ref{theorem:4.4} is available in Appendix \ref{Appendix_a3}. It shows that given any two states, their value function bound is held by predictive bisimulation metrics.  In addition, the theory suggests that distant states in the metric possibly differ substantially in long-term value, while simultaneously increasing the probability of discovering high-value states. The feature also motivates us to construct meaningful exploration bonuses by the bisimulation metric.

\textbf{Metric-based Potential Function.}~Nevertheless, using the metric itself as an intrinsic reward does not reliably reflect meaningful novelty, and it may encourage agents to make local transitions. Instead, we build a relative novelty signal via a potential function defined in the learned latent space. 

First, we define a metric-based potential with \textcolor{blue}{pseudo-state anchor} $s_\star$, which aims to potentially measure the absolute task-relevant novelty of the current state in a batch of samples with a batch size of $B$,
\begin{equation}
\begin{aligned}
\Phi(s_t)
&= d_{\text{pre}}(s_t,s_\star;\bar\omega) \\
&=c_R\big| \hat {r}_\theta(s_t,a_t;\bar\omega) - \hat  r_{\star}\big| +c_T\,\mathcal{W}_1\big(\widehat {\mathcal P}(s_t,a_t;\bar\omega),z'_{\star}\big) 
\end{aligned} 
\label{eq:potential}
\end{equation}
with
\begin{equation}
\begin{aligned}
r_{\star}=\frac{1}{B}\sum_{k=1}^B  \mathbf{r}_{(k)},~z'_{\star}=\frac{1}{B}\sum_{k=1}^B \mathbf{z}'_{(k)}
\end{aligned} 
\label{eq:potential2}
\end{equation}
where $r_{\star}$ and $z'_{\star}$ denote the anchor reward and the latent anchor state averaged from batch transitions in the training stage, as well as $\mathbf{z}'_{(k)}=\widehat {\mathcal P}(s_k,a_k;\bar\omega)$. Particularly, $z'_{\star}$ calculates the element-wise average of a batch of next latent state across hidden neurons. Therefore, we employ this abstract average information as anchors to approximate the global novelty of the current latent state. 

Then, we introduce potential-based shaping to compute the intrinsic exploration bonus between adjacent latent states. Specifically, it calculates the relative discounted distance between consecutive metric-based potentials as the intrinsic bonus, i.e., $F(s_t,a,s_{t+1})$, thus producing a meaningful exploration bonus that approximates the novelty relevant to the global task.
\begin{equation}
\begin{aligned}
F(s_t,a,s_{t+1})
&= \gamma\,\Phi(s_{t+1}) - \Phi(s_t) \\
&= \gamma\, d_{\text{pre}}(s_{t+1}, s_\star;\bar\omega) - d_{\text{pre}}(s_t, s_\star;\bar\omega),
\end{aligned} 
\label{eq:intrinsic_bonus}
\end{equation}
Finally, I define the intrinsic reward $r_{int}=F(s_t,a,s_{t+1})$ and a new reward function that aggregates the external rewards and the intrinsic reward, i.e., $r'=r+\eta \,r^{\text{int}}$, to train the temporal difference loss. The selection of $\eta$ can be found in the Appendix \ref{parameter}.

\begin{theorem}[Policy Invariance of Metric-based Shaping]\label{thm:35}
Let $\mathcal{M} = (\mathcal{S}, \mathcal{A}, \mathcal{P}, r, \gamma)$ denote the original MDP, and consider the modified reward function $r'=r+\eta\, F(s,a,a')$, where $F$ defined in Eq \eqref{eq:intrinsic_bonus}. Then the modified MDP $\mathcal{M}' = (\mathcal{S}, \mathcal{A}, \mathcal{P}, r', \gamma)$ with the metric-based potential preserves the optimal policy of $\mathcal{M}$ by,
\begin{equation}
Q_{\mathcal{M}'}^{*}(s,a) = Q_{\mathcal{M}}^{*}(s,a) - \eta\,\Phi(s), 
\quad \forall s \in \mathcal{S}.
\end{equation}
\end{theorem}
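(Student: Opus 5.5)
This is the Ng--Harada--Russell potential-based shaping argument applied to $\Phi(s)=d_{\text{pre}}(s,s_\star;\bar\omega)$. The only structural requirements are that $\Phi$ is a fixed, bounded, real-valued function of the state alone, and that the shaping term has the form $\gamma\Phi(s')-\Phi(s)$.

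\textbf{Conventions.} We treat the frozen parameters $\bar\omega,\bar\theta$ and the anchor $(\hat r_\star,z'_\star)$ as fixed when analysing $\mathcal{M}'$. Under this convention $\Phi$ does not depend on the transition being shaped. Any dependence of $\Phi$ on $a_t$ in Eq.~\eqref{eq:potential} is read as the policy-marginal expectation, as in Eq.~\eqref{eq:predictive_operator}, so that $\Phi:\mathcal{S}\to\mathbb{R}$. Boundedness of $\Phi$ comes from Theorem~\ref{thm:53}'s compactness setting, or directly from the bound $\frac{c_R}{1-c_T}$ times the maximal reward gap in Lemma~\ref{lemma:5.1}, applied to the predictive metric. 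We also read the shaping term as $F(s,a,s')$ with $s'\sim\mathcal{P}(\cdot|s,a)$.

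\textbf{Step 1: the shaped Bellman equation.} Start from the optimality equation of $\mathcal{M}'$:
\begin{equation*}
Q^*_{\mathcal{M}'}(s,a)=\mathbb{E}_{s'}\big[r(s,a)+\eta\gamma\Phi(s')-\eta\Phi(s)+\gamma\max_{a'}Q^*_{\mathcal{M}'}(s',a')\big].
\end{equation*}
Define the candidate $\hat Q(s,a)=Q^*_{\mathcal{M}}(s,a)-\eta\Phi(s)$ and substitute it into the right-hand side. Because $\Phi(s')$ does not depend on $a'$, it can be pulled outside the max: $\max_{a'}\hat Q(s',a')=\max_{a'}Q^*_{\mathcal{M}}(s',a')-\eta\Phi(s')$. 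The term $\eta\gamma\Phi(s')$ then cancels, and the right-hand side becomes $\mathbb{E}_{s'}\big[r+\gamma\max_{a'}Q^*_{\mathcal{M}}(s',a')\big]-\eta\Phi(s)$. The expectation is exactly $Q^*_{\mathcal{M}}(s,a)$ by the Bellman equation of $\mathcal{M}$, so the whole expression equals $\hat Q(s,a)$.

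\textbf{Step 2: uniqueness.} The Bellman optimality operator of $\mathcal{M}'$ is a $\gamma$-contraction in sup norm on bounded functions. This uses $\gamma<1$ and the fact that $r'$ is bounded, since $\Phi$ is. By Banach's theorem its fixed point is unique, so $\hat Q=Q^*_{\mathcal{M}'}$, which is the stated identity.

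\textbf{Step 3: policy invariance.} For each $s$, the map $a\mapsto Q^*_{\mathcal{M}'}(s,a)$ differs from $a\mapsto Q^*_{\mathcal{M}}(s,a)$ by the action-independent constant $\eta\Phi(s)$. Hence the two functions have the same argmax sets, and the greedy optimal policies coincide. If terminal states are present, the standard convention $\Phi(s_{\text{terminal}})=0$ handles the boundary case.

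\textbf{Main obstacle.} The delicate point is the conventions fixed at the start, not the algebra. The potential in Eq.~\eqref{eq:potential} is written with a sampled $\hat r_\theta$, depends on $a_t$, and uses a batch-dependent anchor. If $\Phi$ depended on $a'$ or on the transition, the max cancellation in Step 1 would fail. I would therefore state these conventions explicitly and note that the result holds for any fixed snapshot of the parameters. With a stochastic $\hat r_\theta$, one uses the expectation over the Gaussian noise, which is still a deterministic function of $s$ and still bounded, so Steps 1--3 go through unchanged.
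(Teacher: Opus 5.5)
Your proposal is correct and follows essentially the same route as the paper. The paper also sets $\hat Q = Q^*_{\mathcal{M}} - \eta\Phi$ and pulls $\eta\Phi(s')$ out of the $\max_{a'}$ to show that $\hat Q$ satisfies the Bellman optimality equation of $\mathcal{M}'$; it manipulates $\mathcal{M}$'s equation forward rather than substituting into $\mathcal{M}'$'s, but this is the same computation. It then invokes uniqueness of the fixed point and notes that the action-independent shift leaves the argmax unchanged. Your explicit conventions (treating $\Phi$ as a bounded, frozen function of the state alone, with any $a_t$- or noise-dependence averaged out) do not appear in the paper, but they are exactly what its max-cancellation and its appeal to uniqueness tacitly require.
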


The proof of Theorem \ref{thm:35} can be found in Appendix \ref{Appendix_a4}. The theorem shows that potential-based reward shaping does not change the optimal policy, since optimal action-value functions imply that $\arg\max_a Q^*_{\mathcal{M'}}(s,a)=\arg\max_a Q^*_{\mathcal{M}}(s,a)$ is held. This conclusion demonstrates that intrinsic rewards based on potential functions can more stably transform the predictive bisimulation metric-based distances into value learning and convergence.

\section{Experiment}
In this section, we first compare TEB against powerful visual RL baselines on the challenging MetaWorld benchmark. We then evaluate its pure exploration ability on the Maze2D environment by comparing with recent exploration algorithms. Finally, we present qualitative visualizations and extensive ablation studies to further analyze the key components of TEB.

\subsection{Experimental Setup}

\paragraph{Environments.}
MetaWorld is a challenging benchmark suite for visual robotic manipulation, detailed in the Appendix \ref{Appendix_c1}. In the setting, the agent acts from high-dimensional observations with visual distracting elements, and receives sparse and even success-based rewards. Maze2D is a low-dimensional continuous-control navigation benchmark with 2D maze layouts ranging from simple Corridors to hard Bottleneck task. The source code will be released later.

\begin{figure*}[t]
    \centering
    \includegraphics[width=\linewidth]{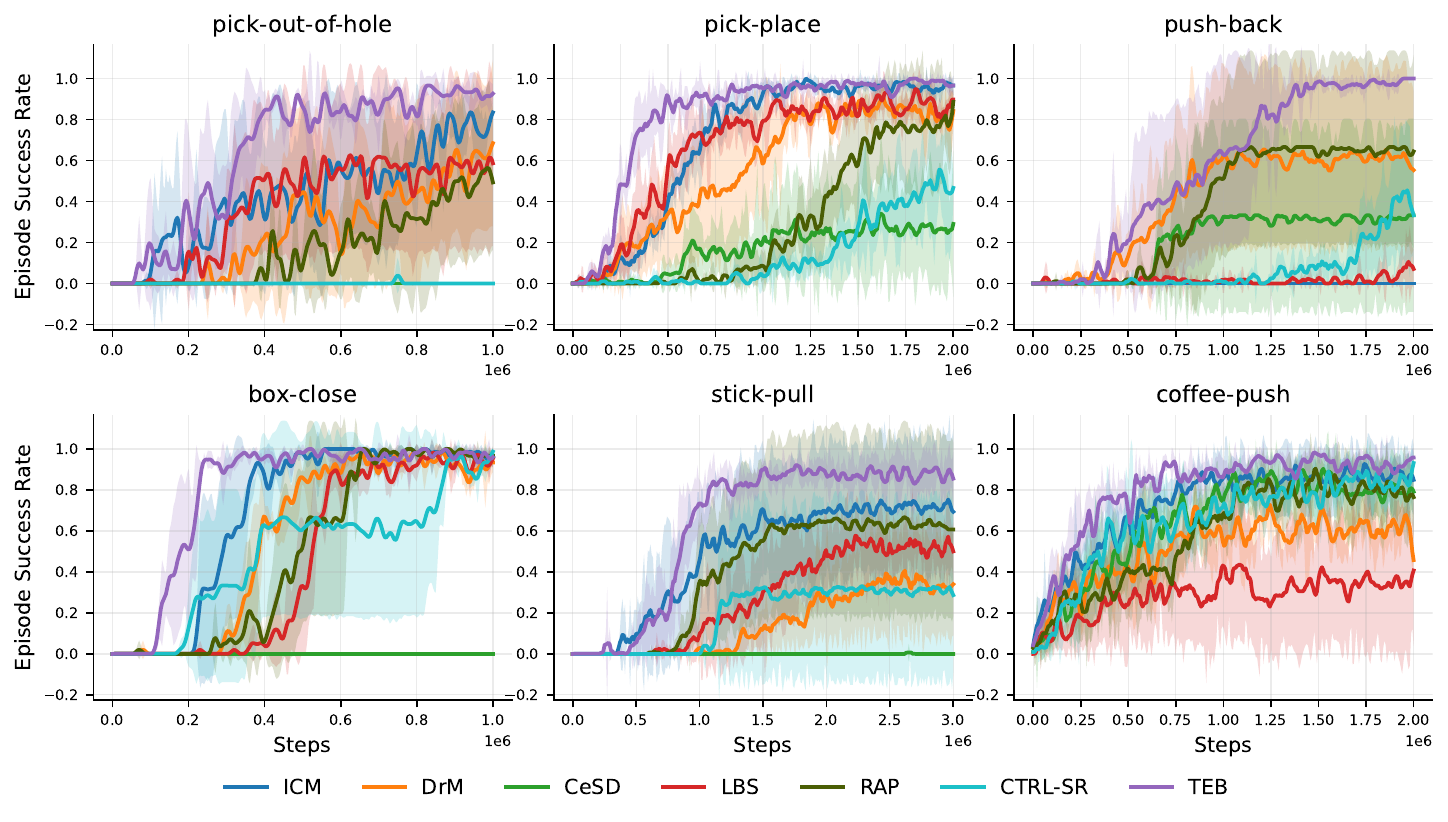}
    \caption{Success rates of TEB and baselines in the MetaWorld environment. Each experiment runs with three random seeds, with shaded region representing the standard deviation across seeds.}
    \label{fig:meta_exp}
\vspace{-15pt}
\end{figure*}

\begin{figure*}[t]
    \centering
    \includegraphics[width=0.95\linewidth]{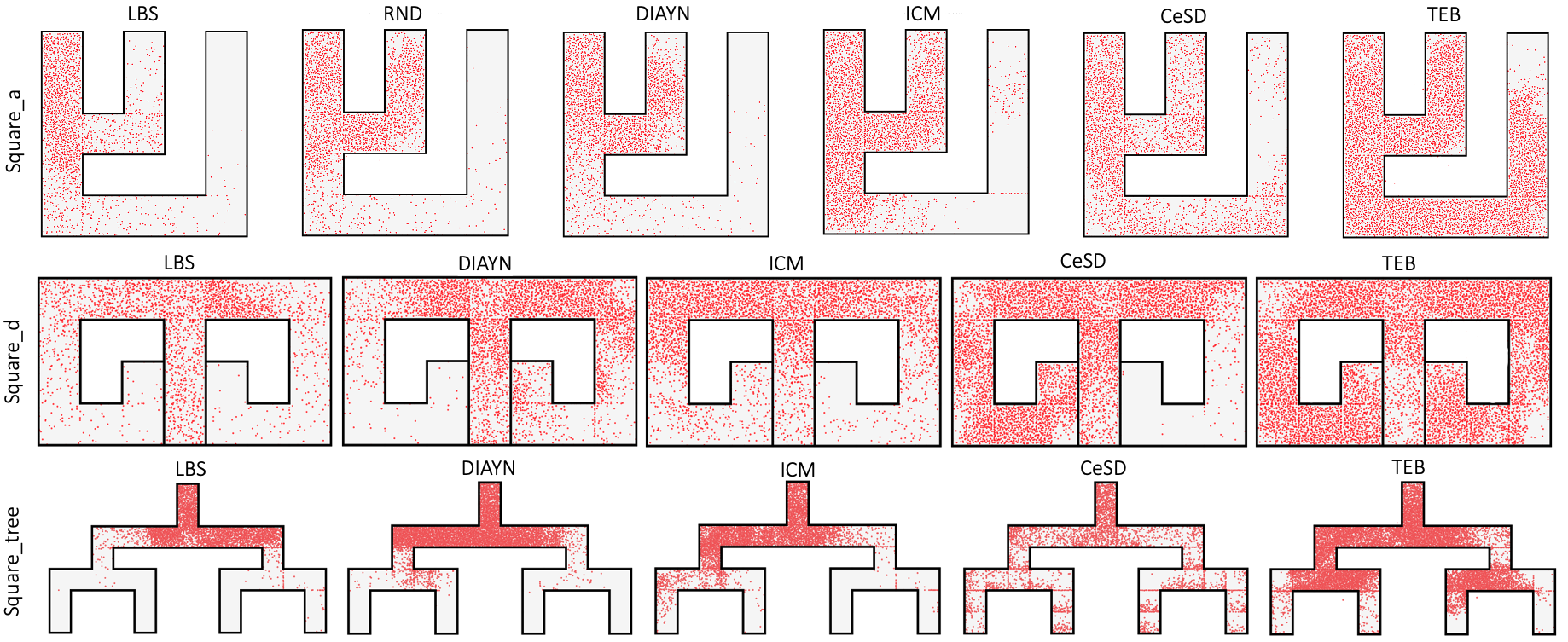}
    \caption{Visualization of state coverage in representative Maze2D tasks.}
    \label{fig:vis_maze}
\vspace{-10pt}
\end{figure*}

\begin{table*}[t]
\centering
\caption{State coverage ratios in the Maze environment. We report the mean and std of 10 seeds for each algorithm.}
\label{tab:maze_square_results}
\resizebox{\linewidth}{!}{
\begin{tabular}{lccccccc}
\toprule
Domains & \textit{Square-a} & \textit{Square-b} & \textit{Square-c} & \textit{Square-d} & \textit{Corridor2} & \textit{Square-tree} & \textit{Square-bottleneck} \\
\midrule
Disagreement & 0.38 $\pm$ 0.08 & 0.38 $\pm$ 0.20 & 0.39 $\pm$ 0.19 & 0.43 $\pm$ 0.14 & 0.48 $\pm$ 0.10 & 0.32 $\pm$ 0.10 & 0.34 $\pm$ 0.07 \\
ICM & 0.54 $\pm$ 0.08 & 0.57 $\pm$ 0.14 & 0.46 $\pm$ 0.06 & 0.59 $\pm$ 0.05 & 0.75 $\pm$ 0.07 & 0.41 $\pm$ 0.04 & 0.33 $\pm$ 0.06 \\
LBS & 0.30 $\pm$ 0.04 & 0.27 $\pm$ 0.05 & 0.25 $\pm$ 0.02 & 0.33 $\pm$ 0.03 & 0.44 $\pm$ 0.07 & 0.23 $\pm$ 0.04 & 0.21 $\pm$ 0.04 \\
Proto & 0.40 $\pm$ 0.04 & 0.40 $\pm$ 0.06 & 0.38 $\pm$ 0.09 & 0.48 $\pm$ 0.04 & 0.71 $\pm$ 0.04 & 0.24 $\pm$ 0.02 & 0.23 $\pm$ 0.01 \\
RND & 0.42 $\pm$ 0.10 & 0.60 $\pm$ 0.13 & 0.39 $\pm$ 0.12 & 0.37 $\pm$ 0.04 & 0.63 $\pm$ 0.10 & 0.28 $\pm$ 0.09 & 0.32 $\pm$ 0.09 \\
\midrule
BeCL & 0.52 $\pm$ 0.05 & 0.48 $\pm$ 0.12 & 0.43 $\pm$ 0.09 & 0.47 $\pm$ 0.05 & 0.67 $\pm$ 0.13 & 0.37 $\pm$ 0.07 & 0.30 $\pm$ 0.05 \\
CeSD & 0.71 $\pm$ 0.05 & 0.66 $\pm$ 0.05 & 0.60 $\pm$ 0.05 & 0.57 $\pm$ 0.06 & 0.82 $\pm$ 0.06 & 0.40 $\pm$ 0.02 & 0.46 $\pm$ 0.05 \\
LSD & 0.42 $\pm$ 0.03 & 0.43 $\pm$ 0.06 & 0.37 $\pm$ 0.02 & 0.45 $\pm$ 0.03 & 0.56 $\pm$ 0.05 & 0.28 $\pm$ 0.02 & 0.35 $\pm$ 0.04 \\
DIAYN & 0.43 $\pm$ 0.05 & 0.48 $\pm$ 0.06 & 0.42 $\pm$ 0.04 & 0.47 $\pm$ 0.03 & 0.57 $\pm$ 0.06 & 0.37 $\pm$ 0.04 & 0.28 $\pm$ 0.04 \\
SMM & 0.42 $\pm$ 0.10 & 0.35 $\pm$ 0.14 & 0.32 $\pm$ 0.07 & 0.35 $\pm$ 0.02 & 0.84 $\pm$ 0.04 & 0.25 $\pm$ 0.02 & 0.34 $\pm$ 0.06 \\
\midrule
TEB (Ours) & \textbf{0.87 $\pm$ 0.07} & \textbf{0.85 $\pm$ 0.07} & \textbf{0.74 $\pm$ 0.04} & \textbf{0.77 $\pm$ 0.04} & \textbf{0.93 $\pm$ 0.02} & \textbf{0.50 $\pm$ 0.04} & \textbf{0.47 $\pm$ 0.03} \\
\bottomrule
\end{tabular}
}
\vspace{-10pt}
\end{table*}

\textbf{Baselines.}
In MetaWorld experiments, we conduct a broad comparison of TEB with the current strong visual RL algorithms: DrM \cite{xu2024drm}, CTRL-SR \cite{gao2025spectral}, and RAP \cite{chen2022learning}, and the representative exploration algorithms: ICM \cite{pathak2017curiosity}, CeSD \cite{bai2024constrained}, and LBS \cite{mazzaglia2022curiosity}. All algorithms are on top of the DrQ-v2 framework \cite{yarats2021mastering} except for CTRL-SR that follows the official framework. Specifically, CTRL-SR is a latest algorithm that alleviates representation and exploration difficulties through spectral representations. DrM facilitates policy learning in sparse reward environments by minimizing dormant ratios with visual settings. RAP is a strong algorithm with bisimulation metric-based representation learning. In the Maze environment, the comparative intrinsic exploration baselines consist of common exploration algorithms: LBS \cite{mazzaglia2022curiosity}, RND \cite{burda2018exploration}, Disagreement \cite{pathak2019self}, ICM \cite{pathak2017curiosity}, and ProtoRL \cite{yarats2021reinforcement}, and 5 skill discovery baselines: CeSD \cite{bai2024constrained}, BeCL \cite{yang2023behavior}, LSD \cite{park2022lipschitz}, DIAYN \cite{eysenbach2018diversity}, and SMM \cite{lee2019efficient}, where CeSD is standard and state-of-the-art. More details on these baselines in the Appendix \ref{appendix:baselines}. 

\textbf{Comparison Metrics.}
In MetaWorld, the success rate of each task is the metric for comparing all algorithms. We train different environment steps according to the difficulty of the task. In Maze2D, to quantify exploration ability, we compare the state coverage ratios of the policies over 100K environment steps, where the ratio measures the proportion of visited units to all units in a task map.

\subsection{MetaWorld Experiments}
The experiments evaluate the ability of TEB in enabling task-aware exploration and learning effective policies from pixel observations. We consider six tasks of varying difficulty, e.g., hard \textit{Stick-pull} and middle \textit{Box-close}. Note that even for the simpler \textit{Box-close}, learning an efficient policy remains non-trivial due to their sparse reward structure and significant visual distractions. As shown in Figure \ref{fig:meta_exp}, TEB consistently outperforms strong baseline methods across all tasks, which shows both faster convergence and higher success rates. In particular, on challenging tasks such as \textit{Stick-pull} and \textit{Push-back}, TEB achieves success rates of 87.9\% and 98.4\% (the best  success rates are in Table \ref{table2} of the Appendix \ref{Appendix_c2}) and significantly surpasses all baselines, which empirically demonstrates its strong comprehensive performance in complex visual environments with sparse rewards. By contrast, among intrinsic exploration baselines, we observe that CeSD fails to obtain rewards and successfully complete tasks such as \textit{Push-back}. This likely due to the limitations of skill discovery–driven intrinsic rewards in challenging visual tasks. It is worth noting that the inverse dynamics model of ICM can also effectively filter out interfering elements, potentially providing strong support for more accurate exploration. Furthermore, CTRL-SR converges slowly in many tasks, such as the \textit{Push-back} task, and may even fail to learn a responsive policy.


\subsection{Reward-free Maze2D Experiments}
In Maze2D tasks, our experiments aim to verify whether intrinsic rewards built on the predictive bisimulation metric can demonstrate efficient exploration abilities under reward-free settings. Note that since the Maze2D environment directly provides low-dimensional states, TEB does not train representation learning loss $\mathcal{L}_{\text{bisim}}$, but instead directly measures a metric-based exploration bonus with states. 
As shown in Table \ref{tab:maze_square_results}, compared to existing exploration algorithms, TEB exhibits the strongest pure exploration capability, achieving the highest state coverage across various maze layouts, from simple corridors to complex bottlenecks. Specifically, using 100K steps as a benchmark, TEB shows a significant advantage over the previous best baseline algorithm, i.e., CeSD, in square maze series. For example, TEB achieves coverage scores of 0.87 and 0.85 on \textit{Square\_a} and \textit{Square\_b}, outperforming 0.71 and 0.66 of CeSD, respectively.
Furthermore, the coverage ratios of other exploration and skill discovery algorithms is significantly lower than that of our TEB. In addition, detailed comparisons of training curves can be found in Figure \ref{fig:maze_coverage} of the Appendix \ref{full_maze_learning}. In summary, the experimental results on pure exploration tasks strongly demonstrate that intrinsic rewards derived from bisimulation metrics can efficiently enhance a policy's ability to explore unknown regions. This ability will accelerate the discovery of extrinsic rewards and promote faster policy convergence in practical tasks.
\begin{figure*}[t]
    \centering
    \includegraphics[width=\linewidth]{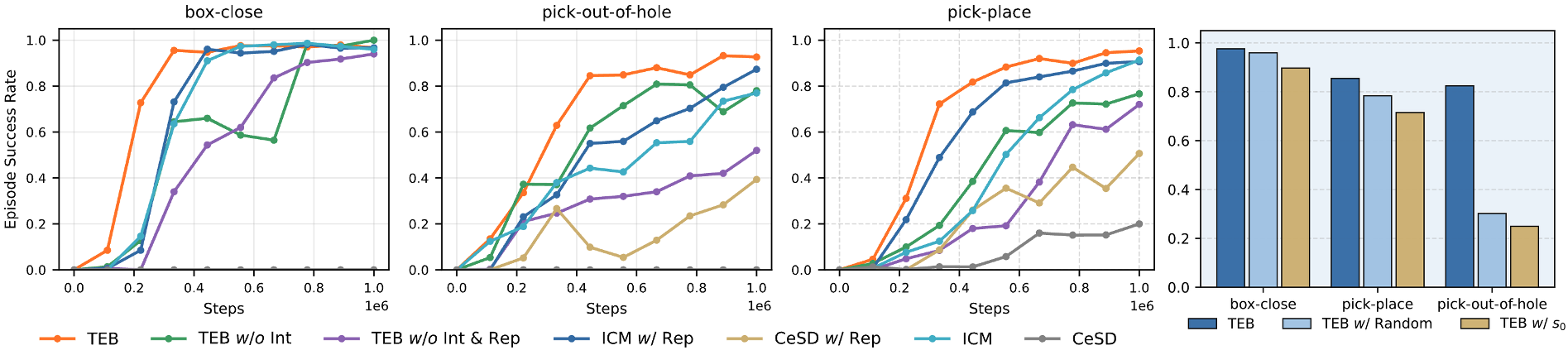}
    \caption{Ablation Studies. \textbf{Left}: the three figures show the ablation curves of TEB components across three tasks; \textbf{Right}: the fourth figure illustrates the ablation results regarding anchor state selection for intrinsic rewards across the three tasks. Each task is run for 1M steps with three random seeds. Note that for better performance comparison, the anchor ablation experiment (right) exhibits the episode success rate at 50\% of the training steps.}
    \label{fig:abla_all}
\vspace{-15pt}
\end{figure*}

 As shown in Figure \ref{fig:vis_maze}, We also visualize state coverage over 100K training steps by rendering whether each discretized spatial bin of the maze map has been visited. We only show representative tasks for partial baselines, and more visualizations are available in the Appendix \ref{full_maze_vis}. 
 Overall, the visualization comparisons intuitively show that TEB achieved the highest map coverage, which qualitatively validates its strongest exploration ability. Specifically, most baselines can only explore areas near the initial state, typically stalling at turns, such as LBS and RND in the \textit{Square\_a} and \textit{Square\_tree} map. Moreover, compared with the strong CeSD baseline, TEB maintains deep exploration while also stably expanding into the surrounding regions around newly discovered states, such as the \textit{Square\_tree} map. These properties indicate that metric-based intrinsic rewards do not prematurely drive value convergence and cause policy stagnation, but instead provide a stable incentive that sustains effective exploration throughout training.

\subsection{Ablation Studies} \label{aba:eta}
\textbf{{TEB Components.}}
This ablation study analyzes the effectiveness of predictive bisimulation metric-based representation learning (labeled ``Rep'') and metric-based intrinsic reward (``Int'') in TEB. Therefore, we progressively remove components (``TEB w/o Rep'', ``TEB w/o Rep \& Int'') and also plug our representation module into ICM and CeSD (``ICM w/ Rep'', ``CeSD w/ Rep''). As depicted in Figure \ref{fig:abla_all}, we observe several intuitive results. (1) Comparing the first three variants in the figure, both the representation module and the intrinsic module provide non-trivial gains on all tasks except \textit{Box-close}, which validates the effectiveness of each component. (2) Comparing TEB against powerful ``ICM w/ Rep'' and ``CeSD w/ Rep'', we find that even under a unified metric-based representation backbone, TEB still achieves the best overall performance, which also shows the effectiveness of our intrinsic module and implies the systematic gains from the task-aware exploration. (3) Comparing the last four, plugging our representation into ICM and CeSD consistently improves their performance, suggesting that the module is broadly beneficial.
These results provide strong evidence that both the representation and intrinsic reward components of TEB are individually effective and more importantly, yield powerful performance when coupled within a metric-based framework.

\textbf{{Anchor State Selection.}}
As shown in Figure \ref{fig:abla_all}, the right bar chart reports the episode success rate of the three selection strategies of anchor states. Our TEB with the pseudo-anchor state outperforms both the (``TEB w/ Random'') with random anchor states and the (``TEB w/ $s_0$'') with fixed initial-state anchors, i.e., $s_0$, across all tasks, especially in \textit{Pick-out-of-hole}. Furthermore, we directly observe that suboptimal random anchor strategy (random permutations from batches) is slightly better than the fixed initial anchor strategy, but not significantly. 
Overall, these results highlight that the anchor selection is critical, and demonstrates the superiority of our anchor strategy.

\subsection{Effectiveness of Predicted Gaussian Rewards} \label{aba:eff}
\begin{figure}[ht]
    \centering
    \includegraphics[width=\linewidth]{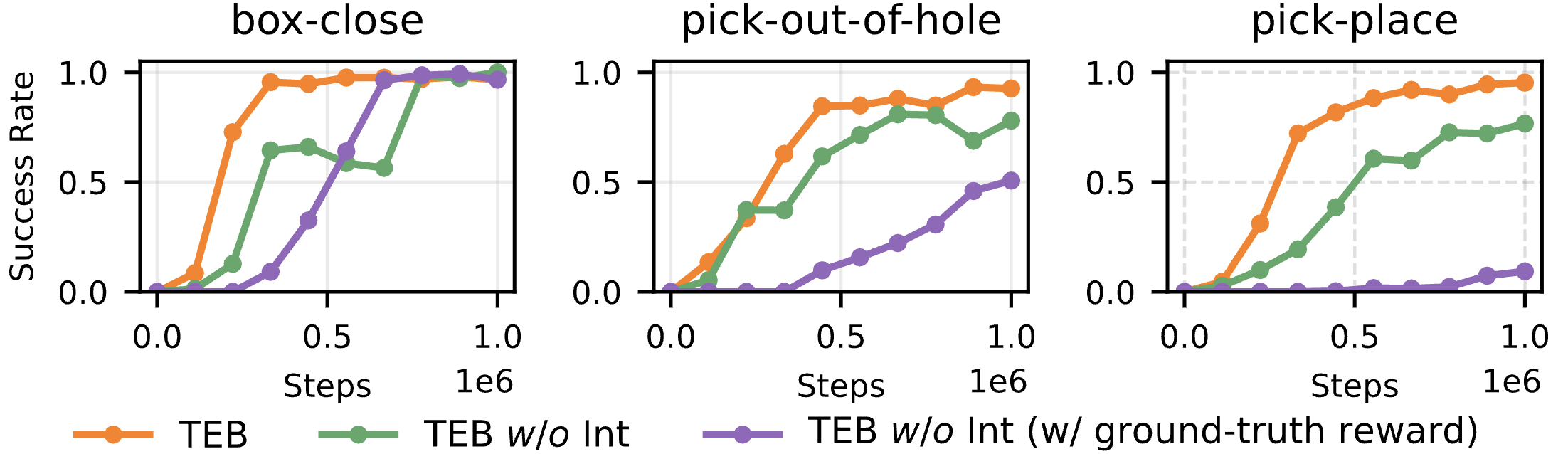}
    \caption{Performance analysis of the bisimulation metric with different reward signals.}
    \label{fig:effe}
\vspace{-10pt}
\end{figure}
As shown in Figure \ref{fig:effe}, we consider two metric-based representations without intrinsic exploration based on TEB, where the internal reward differentials (in Eq. (\ref{eq:predictive_operator})) use predicted Gaussian rewards and ground-truth rewards from environments, respectively. Comparative results across three tasks show that our metric-based representations with predicted rewards significantly outperform the traditional environment rewards on most tasks. Furthermore, the ``TEB w/o Int'' curve reveals that the predicted method can learn quickly. In summary, combined with the TEB curve, the final results indicate that the performance improvement of TEB stems not only from the predicted Gaussian reward but also from the downstream metric-based intrinsic reward, which provides a considerable performance boost.

\section{Related Work}
Reinforcement learning has made notable progress in intrinsic exploration \cite{pathak2019self,park2022lipschitz,yang2023behavior,mazzaglia2022curiosity,zheng2024constrained, sun2025unsupervised}. Early approaches like the intrinsic curiosity module \cite{pathak2017curiosity} promote exploration by leveraging dynamic prediction error. The CIC \cite{laskin2022cic} introduces a contrastive objective to maximize the mutual information between latent skills and resulting trajectories. The recent \cite{bai2024constrained} diversifies skills using policy ensembles regularized by entropy constraints, while preserving behavioral consistency. Variants like DVFB \cite{sun2025unsupervised} enables skill discovery with strong generalization by aligning policies across time. However, most of these methods are tested in low-dimensional \cite{jiang2022unsupervised,wang2024skild}, highly task-specific tasks and are often difficult to extend to visual tasks \cite{laskin2022cic}.

Visual RL under sparse rewards remains challenging due to the both variations and the weak reward signals \cite{yarats2021reinforcement,zheng2023texttt,NEURIPS2024}. Most recent work focuses on visual representation learning \cite{li2024generalizing,lee2025diffusion,zhang2025efficient}, with limited efforts by coupled it with reward explorations \cite{yarats2021reinforcement}. Specifically, the methods widely adopt bisimulation metrics \cite{castro2021mico}, mutual information maximization \cite{cao2025towards}, and contrastive augmentation \cite{laskin2020curl} to learn task-relevant latent spaces. DrM \cite{xu2024drm} boosts representation and exploration via neuron activation, but its task-agnostic bonus demands careful hyperparameter tuning. More recently, spectral representation \cite{gao2025spectral} leverages spectral decomposition of the transition operator to capture compact dynamic features.

\section{Conclusion}
We have introduced a task-aware exploration framework (TEB) that tightly couples visual representation and exploration through a robust predictive bisimulation metric. TEB keeps the metric non-degenerate using predicted Gaussian rewards under sparse rewards, and then builds an effective metric-based intrinsic reward to drive task-aware global exploration without changing the optimal policy. Experiments on challenging MetaWorld and Maze2D show TEB significantly boosts the exploration ability and policy performance.

\section*{Impact Statement}
This paper presents work whose goal is to advance the field of machine learning. There are many potential societal consequences of our work, none of which we feel must be specifically highlighted here.

\nocite{langley00}

\bibliography{example_paper}
\bibliographystyle{icml2026}

\newpage
\onecolumn
\appendix
\section{Theoretical Analysis}
\subsection{The Proof of the Convergence and Fixed Point}
\label{Appendix_a1}
\begin{lemma}[Convergence and Fixed Point] \label{lemma:a1}
Predictive bisimulation metric $\widetilde{d}_{\text{pre}}$ is a contraction mapping w.r.t the $L^\infty$ norm on $\mathbb{R}^{\mathcal{S}\times\mathcal{S}}$, and there exists a fixed-point $\widetilde{d}_{\text{pre}}$.
\end{lemma}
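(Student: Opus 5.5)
The plan is to show that the predictive operator $\widehat{\mathcal{T}}^\pi$ in Eq.~\eqref{eq:predictive_operator} is a $c_T$-contraction in the sup norm. The Banach fixed-point theorem will then give existence and uniqueness of $\widetilde d_{\text{pre}}$, and also convergence of the iterates $\widehat{\mathcal{T}}^{\pi,n} d_0$ from any starting function. We work on the space $\mathcal{B}$ of bounded (pseudo)metrics, or more generally bounded nonnegative functions $d:\mathcal{S}\times\mathcal{S}\to\mathbb{R}_{\ge 0}$, with $\|d\|_\infty=\sup_{s_i,s_j}|d(s_i,s_j)|$. This space is complete: uniform limits of bounded functions are bounded, and uniform limits preserve symmetry, nonnegativity and the triangle inequality. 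As a first step, I will check that $\widehat{\mathcal{T}}^\pi$ maps $\mathcal{B}$ to itself. The reward term $c_R\Delta^\pi_R$ does not depend on $d$. It is bounded because the Gaussian predictor has bounded mean outputs and $\sigma_\theta\in[\sigma_{\min},\sigma_{\max}]$, so $\mathbb{E}|\hat r_\theta(s_i,a_i)-\hat r_\theta(s_j,a_j)|<\infty$ uniformly. The transition term is bounded by $c_T\|d\|_\infty$. Symmetry and the triangle inequality carry over from $|\cdot|$ and from $\mathcal{W}_1(d)$ being a (pseudo)metric on distributions whenever $d$ is one.

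The key step is the Lipschitz estimate for the Wasserstein term with respect to its ground cost. Take $d,d'\in\mathcal{B}$. The reward terms cancel exactly, so
\[
\big|(\widehat{\mathcal{T}}^\pi d)(s_i,s_j)-(\widehat{\mathcal{T}}^\pi d')(s_i,s_j)\big| = c_T\big|\mathcal{W}_1(d)(\mu,\nu)-\mathcal{W}_1(d')(\mu,\nu)\big|,
\]
with $\mu=\widehat{\mathcal P}^\pi(\cdot|s_i)$ and $\nu=\widehat{\mathcal P}^\pi(\cdot|s_j)$. I will use the primal coupling form $\mathcal{W}_1(d)(\mu,\nu)=\inf_{\gamma\in\Gamma(\mu,\nu)}\int d\,d\gamma$. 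For any coupling $\gamma$,
\[
\int d\,d\gamma \le \int d'\,d\gamma + \|d-d'\|_\infty .
\]
Taking the infimum over $\gamma$ gives $\mathcal{W}_1(d)\le \mathcal{W}_1(d')+\|d-d'\|_\infty$, and the symmetric argument gives the reverse inequality. Taking the sup over $(s_i,s_j)$ then yields $\|\widehat{\mathcal{T}}^\pi d-\widehat{\mathcal{T}}^\pi d'\|_\infty\le c_T\|d-d'\|_\infty$ with $c_T<1$. An alternative route goes through the Kantorovich dual, where the class of $1$-Lipschitz test functions itself depends on $d$. That is messier, so I will stay with the primal form.

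The main obstacle I expect is the measure-theoretic bookkeeping rather than the inequality itself. The primal form needs $d$ to be measurable and the transition kernels to be well defined. The cleanest fix is to assume, as in the standard $\pi$-bisimulation setting of Theorem~\ref{thm:41}, that $\mathcal{S}$ is compact (or finite) and that $\widehat{\mathcal P}^\pi$ is a Markov kernel; then everything is integrable and bounded. With the contraction in hand, Banach's theorem gives a unique fixed point $\widetilde d_{\text{pre}}=\widehat{\mathcal{T}}^\pi\widetilde d_{\text{pre}}$ in $\mathcal{B}$. The iterates converge geometrically, with $\|\widehat{\mathcal{T}}^{\pi,n}d_0-\widetilde d_{\text{pre}}\|_\infty\le c_T^n\|d_0-\widetilde d_{\text{pre}}\|_\infty$, which is what justifies the bootstrapped loss in Eq.~\eqref{eq:bisim_loss}. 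Finally, because the fixed point is the uniform limit of pseudometric iterates, $\widetilde d_{\text{pre}}$ is itself a pseudometric.
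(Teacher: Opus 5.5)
Your argument is the same as the paper's. The $d$-independent term $c_R\Delta^\pi_R$ cancels, and the primal coupling form gives $|\mathcal{W}_1(d)(\mu,\nu)-\mathcal{W}_1(d')(\mu,\nu)|\le\|d-d'\|_\infty$. The supremum over pairs then yields a $c_T$-contraction, and Banach supplies the fixed point. Your explicit checks that the operator maps a complete space into itself (boundedness of $\Delta^\pi_R$, closedness under uniform limits) are an improvement: the paper applies Banach on a space $\mathcal{M}$ it never specifies.

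One claim in your write-up is false: $\widehat{\mathcal{T}}^\pi$ does not preserve pseudometrics, and $\widetilde d_{\text{pre}}$ is not one. In Eq.~\eqref{eq:reward_discrepancy} the two reward samples are drawn independently, with independent actions and independent noise in $\hat r_\theta=\mu_\theta+\sigma_\theta\xi$, where $\sigma_\theta\ge\sigma_{\min}>0$. So even for $s_i=s_j=s$ you get $\Delta^\pi_R(s,s)\ge\frac{2}{\sqrt{\pi}}\sigma_{\min}>0$. Hence $(\widehat{\mathcal{T}}^\pi d)(s,s)\ge c_R\Delta^\pi_R(s,s)>0$ for every $d$, and the fixed point has strictly positive self-distances. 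This positive floor is exactly what Theorem~\ref{thm:53} relies on.

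The fix is to drop the pseudometric space and run Banach on your fallback space of bounded nonnegative (measurable) functions. Alternatively, use its closed subset of symmetric functions satisfying the triangle inequality, which the operator does preserve. For $\Delta^\pi_R$ this follows from the pointwise triangle inequality for three independent reward samples; for $\mathcal{W}_1$ it follows by gluing couplings. The lemma only asserts contraction and existence, so your proof stands. Only your final sentence must be weakened to say that $\widetilde d_{\text{pre}}$ is symmetric and satisfies the triangle inequality, without $\widetilde d_{\text{pre}}(s,s)=0$.
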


\begin{proof}
We refer to the work of Kemertas et al. \cite{kemertas2021towards} to prove this. Take any $d,d'\in\mathcal{M}$ and any state pair $(s_i,s_j)$.
By the operator definition in Eq.~\eqref{eq:predictive_operator},
the predictive reward differential $\Delta_R^\pi(s_i,s_j)$ does not depend on $d$,
hence it cancels:
\begin{align}
&\Big|(\widehat{\mathcal{T}}^\pi d)(s_i,s_j)-(\widehat{\mathcal{T}}^\pi d')(s_i,s_j)\Big| \nonumber\\
&=\Big|
\Big(
c_R\,\Delta_R^\pi(s_i,s_j)
+c_T\,\mathcal{W}_1(d)\big(\widehat{\mathcal{P}}^\pi(\cdot\mid s_i),\widehat{\mathcal{P}}^\pi(\cdot\mid s_j)\big)
\Big)
-
\Big(
c_R\,\Delta_R^\pi(s_i,s_j)
+c_T\,\mathcal{W}_1(d')\big(\widehat{\mathcal{P}}^\pi(\cdot\mid s_i),\widehat{\mathcal{P}}^\pi(\cdot\mid s_j)\big)
\Big)
\Big| \nonumber\\
&=\Big|\Big(c_R\,\Delta_R^\pi(s_i,s_j)-c_R\,\Delta_R^\pi(s_i,s_j)\Big)+
\Big(c_T\,\mathcal{W}_1(d)\big(\widehat{\mathcal{P}}^\pi(\cdot\mid s_i),\widehat{\mathcal{P}}^\pi(\cdot\mid s_j)\big)
-
c_T\,\mathcal{W}_1(d')\big(\widehat{\mathcal{P}}^\pi(\cdot\mid s_i),\widehat{\mathcal{P}}^\pi(\cdot\mid s_j)\big)\Big)
\Big| \nonumber\\
&=c_T\Big|
\mathcal{W}_1(d)\big(\widehat{\mathcal{P}}^\pi(\cdot\mid s_i),\widehat{\mathcal{P}}^\pi(\cdot\mid s_j)\big)
-
\mathcal{W}_1(d')\big(\widehat{\mathcal{P}}^\pi(\cdot\mid s_i),\widehat{\mathcal{P}}^\pi(\cdot\mid s_j)\big)
\Big|.
\label{eq:operator_cancel_align}
\end{align}

Let $\mu:=\widehat{\mathcal{P}}^\pi(\cdot|s_i)$ and $\nu:=\widehat{\mathcal{P}}^\pi(\cdot|s_j)$.
Using the primal form of the $1$-Wasserstein distance induced by a ground metric $d$,
\begin{align}
\mathcal{W}_1(d)(\mu,\nu)
=\inf_{\zeta\in\Pi(\mu,\nu)}\;
\mathbb{E}_{(s',\bar s')\sim \zeta}\!\big[d(s',\bar s')\big]
=\inf_{\zeta\in\Pi(\mu,\nu)}\;\sum_{s',\bar s'} \zeta(s',\bar s')\, d(s',\bar s'),
\end{align}
where $\Pi(\mu,\nu)$ is the set of couplings with marginals $\mu$ and $\nu$.
Then we can bound the difference between the two Wasserstein terms as
\begin{align}
&\Big|\mathcal{W}_1(d)(\mu,\nu)-\mathcal{W}_1(d')(\mu,\nu)\Big|
\nonumber=
\left|
\inf_{\zeta\in\Pi(\mu,\nu)} \mathbb{E}_{\zeta}[d]
-
\inf_{\zeta\in\Pi(\mu,\nu)} \mathbb{E}_{\zeta}[d']
\right|
\nonumber\\
&\le
\sup_{\zeta\in\Pi(\mu,\nu)}
\left|\mathbb{E}_{\zeta}\!\big[d-d'\big]\right|
\nonumber\\
&\le
\sup_{\zeta\in\Pi(\mu,\nu)}
\mathbb{E}_{\zeta}\!\big[\,|d-d'|\,\big]
\;\le\;
\|d-d'\|_\infty.
\label{eq:w1_diff_bound}
\end{align}
Combining Eq.~\eqref{eq:operator_cancel_align} and Eq.~\eqref{eq:w1_diff_bound} yields
\begin{align}
\Big|(\widehat{\mathcal{T}}^\pi d)(s_i,s_j)-(\widehat{\mathcal{T}}^\pi d')(s_i,s_j)\Big|
\le c_T \|d-d'\|_\infty.
\end{align}
Taking the supremum over $(s_i,s_j)\in\mathcal{S}\times\mathcal{S}$ gives
\begin{align}
\|\widehat{\mathcal{T}}^\pi d-\widehat{\mathcal{T}}^\pi d'\|_\infty
\le c_T \|d-d'\|_\infty.
\end{align}
Since $c_T\in[0,1)$, $\widehat{\mathcal{T}}^\pi$ is a contraction on $(\mathcal{M},\|\cdot\|_\infty)$.
By the Banach fixed-point theorem, there exists a unique fixed point
$\widetilde d_{\text{pre}}\in\mathcal{M}$ such that
$\widetilde d_{\text{pre}}=\widehat{\mathcal{T}}^\pi \widetilde d_{\text{pre}}$.
\end{proof}

\subsection{The Proof of the Predictive Reward Prevents Degenerate Metric} \label{Appendix_a2}

\begin{theorem}[Predictive Reward Prevents Degenerate Metric]\label{thmA3} Let the state space $\mathcal{S}$ be compact and the policy $\pi$ fixed. 
Consider the predictive bisimulation operator $\widehat{\mathcal{T}}^\pi$, the learned reward model satisfies
$\hat r_\theta(s_t,a_t)=r_t+\varepsilon_t$ with $\mathrm{Var}[\varepsilon_t|s_t,a_t]=\Sigma > 0$. 
Then, the unique fixed point $\widetilde d_{\text{pre}} = \widehat{\mathcal{T}}^\pi \widetilde d_{\text{pre}}$ 
admits a strictly positive expected diameter in the sparse-reward region 
$\mathcal{S}_0 = \{s| r(s,a)=0\}$:
\begin{equation}
\mathbb{E}_{\mathcal{D}}\big[\mathrm{diam}(\mathcal{S}_0; \widetilde d_{\text{pre}})\big]
\in
\Big(
0,\frac{c_R}{1-c_T}\cdot \frac{2}{\sqrt{\pi}}\sqrt{\Sigma}
\Big].
\label{eq:diameter_bound}
\end{equation}.
\end{theorem}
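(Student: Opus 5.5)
The plan is to prove the two endpoints of the interval separately. Both rest on the fixed-point identity $\widetilde d_{\text{pre}} = \widehat{\mathcal{T}}^\pi \widetilde d_{\text{pre}}$, whose existence and uniqueness come from Lemma~\ref{lemma:a1}. On $\mathcal{S}_0$ we have $r_t=0$, so for $s_i,s_j\in\mathcal{S}_0$ the predicted rewards reduce to pure noise, $\hat r_\theta(s_i,a_i)=\varepsilon_i$ and $\hat r_\theta(s_j,a_j)=\varepsilon_j$. The reward term $\Delta^\pi_R(s_i,s_j)$ therefore becomes $\mathbb{E}|\varepsilon_i-\varepsilon_j|$. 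The whole argument turns on controlling this single quantity from above and below.

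\textbf{Upper bound.} I would mimic Lemma~\ref{lemma:5.1}. Set $D:=\sup_{s,s'}\widetilde d_{\text{pre}}(s,s')$, which is finite by compactness together with the contraction bound. For any coupling, $\mathcal{W}_1(\widetilde d_{\text{pre}})\le D$, so applying the fixed-point equation and taking suprema gives $D\le c_R\sup\Delta^\pi_R + c_T D$. Rearranging, $D\le \frac{c_R}{1-c_T}\sup \Delta^\pi_R$.

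It then remains to bound $\Delta^\pi_R$ on $\mathcal{S}_0$. Using the Gaussian model of Eq.~\eqref{eq:reward_dist}, take $\varepsilon_i,\varepsilon_j$ to be independent, zero-mean, each with variance $\Sigma$. Then $\varepsilon_i-\varepsilon_j\sim\mathcal{N}(0,2\Sigma)$, and the half-normal mean formula $\mathbb{E}|X|=\sigma\sqrt{2/\pi}$ gives $\Delta^\pi_R=\sqrt{2\Sigma}\sqrt{2/\pi}=\frac{2}{\sqrt\pi}\sqrt\Sigma$. If instead the residuals have a nonzero mean $\mu_\theta-r_t$, I would absorb it by interpreting the bound in expectation over $\mathcal{D}$, or by centering.

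One caveat: strictly, the diameter restricted to $\mathcal{S}_0$ involves next states that may leave $\mathcal{S}_0$. I would handle this either by assuming $\mathcal{S}_0$ is closed under $\widehat{\mathcal{P}}^\pi$, which is the worst-case sparse region, or by using the global bound with the reward gap taken as noise everywhere. I expect the paper intends the former reading.

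\textbf{Lower bound.} The Wasserstein term is nonnegative, so the fixed point satisfies $\widetilde d_{\text{pre}}(s_i,s_j)\ge c_R\Delta^\pi_R(s_i,s_j)=c_R\,\mathbb{E}|\varepsilon_i-\varepsilon_j|$. This quantity is strictly positive whenever $\varepsilon_i-\varepsilon_j$ is not almost surely zero, and that holds because $\mathrm{Var}[\varepsilon]=\Sigma>0$ and the two draws are independent. Taking any pair in $\mathcal{S}_0$, and assuming $\mathcal{S}_0$ is nonempty (which is needed anyway), the expected diameter is at least $c_R\frac{2}{\sqrt\pi}\sqrt\Sigma>0$.

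\textbf{Main obstacle.} The delicate step is the modelling assumption behind $\Delta^\pi_R$: the two predictive samples must be treated as independent Gaussian residuals with a common variance $\Sigma$. Only then does $\mathbb{E}|\varepsilon_i-\varepsilon_j|$ take the exact value $\frac{2}{\sqrt\pi}\sqrt\Sigma$, and only then does the argument make sense even for $s_i=s_j$, where the diameter is being evaluated in expectation rather than as a true pseudometric. I would state this independence and Gaussianity explicitly, citing the reparameterization $\hat r_\theta=\mu_\theta+\sigma_\theta\xi$. For a non-Gaussian $\varepsilon$, I would fall back on the Jensen bound $\mathbb{E}|\varepsilon_i-\varepsilon_j|\le\sqrt{2\Sigma}$, which gives a slightly weaker constant.
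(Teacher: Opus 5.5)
Your proof is correct under the assumptions you state explicitly: independent, equal-mean Gaussian residuals, and invariance of $\mathcal{S}_0$ under $\widehat{\mathcal{P}}^\pi$. Your lower bound is the paper's: drop the nonnegative Wasserstein term and evaluate $\mathbb{E}|\varepsilon_i-\varepsilon_j|$ for $\varepsilon_i-\varepsilon_j\sim\mathcal{N}(0,2\Sigma)$.

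The upper bound takes a different route from the paper. The paper never takes a supremum. It works with $\mathbb{E}_{\mathcal{D}}$ throughout and bounds $\mathcal{W}_1(\widetilde d_{\text{pre}})(\mu,\nu)$ by the product-coupling cost $\mathbb{E}_{x\sim\mu,y\sim\nu}[\widetilde d_{\text{pre}}(x,y)]$. It then asserts that the $\mathcal{D}$-average of this cost is at most $\mathbb{E}_{\mathcal{D}}[\widetilde d_{\text{pre}}(s_i,s_j)]$. This gives $\mathbb{E}_{\mathcal{D}}[\widetilde d_{\text{pre}}]\le\frac{c_R}{1-c_T}\mathbb{E}_{\mathcal{D}}[\Delta^\pi_R]$, with ``diam'' defined as this expected pairwise distance. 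That middle step silently needs next-state pairs to stay in $\mathcal{S}_0$ and to be distributed like the sampled pairs. This is a stationarity condition on $\mathcal{D}$, stronger and less visible than your closure assumption.

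Your Lemma~\ref{lemma:5.1}-style supremum argument instead bounds every pair in $\mathcal{S}_0$. It therefore bounds the genuine supremum diameter, and hence its expectation under any $\mathcal{D}$. It also gives the explicit lower constant $c_R\frac{2}{\sqrt{\pi}}\sqrt{\Sigma}$ instead of bare positivity. The paper's formulation only buys closeness to the replay-buffer training objective.

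In fact, under your assumptions $\Delta^\pi_R$ is the same constant on all of $\mathcal{S}_0\times\mathcal{S}_0$. Uniqueness of the restricted fixed point then forces $\widetilde d_{\text{pre}}\equiv\frac{c_R}{1-c_T}\frac{2}{\sqrt{\pi}}\sqrt{\Sigma}$ there, so the upper endpoint is attained.

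One side remark does not hold up. A nonzero mean difference $m$ between the residuals cannot be absorbed by averaging over $\mathcal{D}$, because $\mathbb{E}|X|$ for $X\sim\mathcal{N}(m,2\Sigma)$ is minimized at $m=0$. The upper bound genuinely needs equal (e.g.\ zero) means. The paper assumes this too when it writes $\delta\sim\mathcal{N}(0,2\Sigma)$.
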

\begin{proof}
Fix $\pi$ and consider $(s_i,s_j)\in\mathcal{S}_0\times\mathcal{S}_0$ sampled from $\mathcal{D}$.
By the fixed-point equation of $\tilde d_{\mathrm{pre}}$,
\begin{align}
\tilde d_{\mathrm{pre}}(s_i,s_j)
=
c_R\,\Delta_R^\pi(s_i,s_j)
+
c_T\,\mathcal{W}_1(\tilde d_{\mathrm{pre}})\!\left(\mathcal{P}^\pi(\cdot|s_i),\mathcal{P}^\pi(\cdot|s_j)\right),
\label{eq:fp_thm33_short}
\end{align}
where $\Delta_R^\pi(s_i,s_j)=\mathbb{E}_{a_i,a_j\sim\pi}\big[\big|\hat r_\theta(s_i,a_i)-\hat r_\theta(s_j,a_j)\big|\big]$.
Since $\mathcal{W}_1(\cdot)\ge 0$ and $\Delta_R^\pi\ge 0$, we immediately have
\begin{align}
\mathbb{E}_{\mathcal{D}}\big[\tilde d_{\mathrm{pre}}(s_i,s_j)\big]
~\ge~
c_R\,\mathbb{E}_{\mathcal{D}}\big[\Delta_R^\pi(s_i,s_j)\big].
\label{eq:lower_bd_main}
\end{align}

For the upper bound, take $\mathbb{E}_{\mathcal{D}}[\cdot]$ on both sides of \eqref{eq:fp_thm33_short}.
Using the primal form of $W_1$ and choosing the product coupling gives
$W_1(\tilde d_{\mathrm{pre}})(\mu,\nu)\le \mathbb{E}_{x\sim\mu,y\sim\nu}[\tilde d_{\mathrm{pre}}(x,y)]$. The expected cost under this product coupling is bounded by the same pairwise expectation,
hence
\begin{align}
\mathbb{E}_{\mathcal{D}}\!\Big[
\mathcal{W}_1(\tilde d_{\mathrm{pre}})\!\left(\mathcal{P}^\pi(\cdot|s_i),\mathcal{P}^\pi(\cdot|s_j)\right)
\Big]
~\le~
\mathbb{E}_{\mathcal{D}}\big[\tilde d_{\mathrm{pre}}(s_i,s_j)\big].
\label{eq:w1_bd_short}
\end{align}
Substituting \eqref{eq:w1_bd_short} into the expectation of \eqref{eq:fp_thm33_short} yields
\begin{align}
\mathbb{E}_{\mathcal{D}}\big[\tilde d_{\mathrm{pre}}(s_i,s_j)\big]
&\le
c_R\,\mathbb{E}_{\mathcal{D}}\big[\Delta_R^\pi(s_i,s_j)\big]
+
c_T\,\mathbb{E}_{\mathcal{D}}\big[\tilde d_{\mathrm{pre}}(s_i,s_j)\big], \nonumber\\
&\le
\frac{c_R}{1-c_T}\,\mathbb{E}_{\mathcal{D}}\big[\Delta_R^\pi(s_i,s_j)\big].
\label{eq:upper_recur}
\end{align}
It remains to bound $\mathbb{E}_{\mathcal{D}}\!\left[\Delta_R^\pi\right]$ on $\mathcal{S}_0$.
For $s\in\mathcal{S}_0$ we have $r(s,a)=0$, so the learned reward model satisfies
$\hat r_\theta(s,a)=r(s,a)+\varepsilon=\varepsilon$ with $\mathrm{Var}[\varepsilon\mid s,a]=\Sigma>0$.
Therefore, for $(s_i,s_j)\in\mathcal{S}_0\times\mathcal{S}_0$,
$\hat r_\theta(s_i,a_i)-\hat r_\theta(s_j,a_j)=\varepsilon_i-\varepsilon_j
~\triangleq~
\delta$, where $\delta$ is a non-degenerate Gaussian.
If $\varepsilon_i$ and $\varepsilon_j$ are conditionally independent given $(s_i,a_i)$ and $(s_j,a_j)$, then
$\mathrm{Var}[\delta]=2\Sigma$, and hence $\delta\sim\mathcal{N}(0,2\Sigma)$.
Let $z\sim\mathcal{N}(0,1)$, then $\delta=\sqrt{2\Sigma}\,z$ and
\begin{align}
\mathbb{E}\big[|\delta|\big]
&=
\sqrt{2\Sigma}\,\mathbb{E}\big[|z|\big]
=
\sqrt{2\Sigma}\cdot \sqrt{\frac{2}{\pi}}
=
\frac{2}{\sqrt{\pi}}\sqrt{\Sigma}.
\tag{38}
\end{align}
Moreover, $\mathbb{E}[|\delta|]>0$ since $\Sigma>0$.
Therefore, on $\mathcal{S}_0$ we have
\begin{align}
\mathbb{E}_{\mathcal{D}}\!\left[\Delta_R^\pi(s_i,s_j)\right]
&=
\mathbb{E}_{\mathcal{D}}\mathbb{E}_{a_i,a_j\sim\pi}
\Big[
\big|\hat r_\theta(s_i,a_i)-\hat r_\theta(s_j,a_j)\big|
\Big] \nonumber\\
&=
\mathbb{E}\big[|\delta|\big]
~\in~
\left(0,\;\frac{2}{\sqrt{\pi}}\sqrt{\Sigma}\right].
\end{align}
Combining this with (35)--(37) gives
\begin{align}
0
<
\mathbb{E}_{\mathcal{D}}\!\left[\tilde d_{\mathrm{pre}}(s_i,s_j)\right]
~\le~
\frac{c_R}{1-c_T}\cdot \frac{2}{\sqrt{\pi}}\sqrt{\Sigma}.
\tag{39}
\end{align}
By definition of $\mathrm{diam}(\mathcal{S}_0;\tilde d_{\mathrm{pre}})$ as the expected pairwise distance over
$\mathcal{S}_0\times\mathcal{S}_0$ under $\mathcal{D}$, we proved it.
\end{proof}

\subsection{The Proof of the Value Difference Bound}\label{Appendix_a3}
\begin{theorem}[Value Difference Bound]
Given states $s_i$ and $s_j$, and a policy $\pi$, let $V^\pi(s)$ be the state value function over policy $\pi$, we have,
\begin{equation}
\label{eq:vdb}
\big|V^\pi(s_i)-V^\pi(s_j)\big|\le \tilde d_{\mathrm{pre}}(s_i,s_j).
\tag{11}
\end{equation}
\end{theorem}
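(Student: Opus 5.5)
The plan is the standard co-inductive argument for $\pi$-bisimulation value bounds (as in \cite{castro2020scalable,kemertas2021towards}), adapted to the predictive operator $\widehat{\mathcal{T}}^\pi$. First I will state the assumptions under which the bound can hold, since they are implicit in the statement:
\begin{itemize}
\item[(i)] $c_R\ge 1$ and $c_T\ge\gamma$ (e.g.\ $c_R=1$, $c_T=\gamma$);
\item[(ii)] the reward predictor is unbiased in expectation, $\mathbb{E}[\hat r_\theta(s,a)\mid s,a]=r(s,a)$, i.e.\ $\mathbb{E}[\varepsilon_t\mid s_t,a_t]=0$;
\item[(iii)] the learned policy-marginal transition model agrees with the true one, $\widehat{\mathcal{P}}^\pi=\mathcal{P}^\pi$, at least when evaluating the bound.
\end{itemize}
Without (ii)--(iii) the inequality fails in general, because $V^\pi$ depends only on the true MDP while $\tilde d_{\mathrm{pre}}$ depends only on the models.

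\textbf{Iterates.} Define the policy-evaluation iterates $V_0\equiv 0$ and $V_{n+1}=r^\pi+\gamma\mathcal{P}^\pi V_n$. Define the metric iterates $d_0\equiv 0$ and $d_{n+1}=\widehat{\mathcal{T}}^\pi d_n$. By Lemma~\ref{lemma:a1}, $d_n\to\tilde d_{\mathrm{pre}}$ in $\|\cdot\|_\infty$, and $V_n\to V^\pi$ by the usual contraction of the Bellman evaluation operator. So it suffices to prove by induction that
\begin{equation*}
|V_n(s_i)-V_n(s_j)|\le d_n(s_i,s_j)\quad\text{for all } n \text{ and all pairs } (s_i,s_j),
\end{equation*}
and then pass to the limit. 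The base case is trivial since both sides are $0$.

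\textbf{Inductive step.} Write
\begin{equation*}
|V_{n+1}(s_i)-V_{n+1}(s_j)|\le |r^\pi_{s_i}-r^\pi_{s_j}| + \gamma\,\big|\mathbb{E}_{\mathcal{P}^\pi(\cdot|s_i)}V_n-\mathbb{E}_{\mathcal{P}^\pi(\cdot|s_j)}V_n\big|.
\end{equation*}
For the reward term, use (ii) and Jensen's inequality:
\begin{equation*}
|r^\pi_{s_i}-r^\pi_{s_j}|=\big|\mathbb{E}_{a_i,a_j\sim\pi}\big[\hat r_\theta(s_i,a_i)-\hat r_\theta(s_j,a_j)\big]\big|\le\Delta^\pi_R(s_i,s_j)\le c_R\,\Delta^\pi_R(s_i,s_j).
\end{equation*}
For the transition term, the induction hypothesis says $V_n$ is $1$-Lipschitz with respect to the pseudometric $d_n$. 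The Kantorovich--Rubinstein dual form of $\mathcal{W}_1(d_n)$ then bounds the difference of expectations by $\mathcal{W}_1(d_n)(\mathcal{P}^\pi(\cdot|s_i),\mathcal{P}^\pi(\cdot|s_j))$. Multiplying by $\gamma\le c_T$ and using (iii), the two terms sum to $(\widehat{\mathcal{T}}^\pi d_n)(s_i,s_j)=d_{n+1}(s_i,s_j)$. Alternatively, one can avoid duality by taking an optimal coupling $\zeta$ in the primal form and bounding $|\mathbb{E}_\zeta[V_n(x)-V_n(y)]|\le\mathbb{E}_\zeta[d_n(x,y)]$.

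\textbf{Main obstacle.} The step needing care is the transition term. The dual form requires $d_n$ to be a (pseudo)metric, or at least the Lipschitz property to hold along couplings. The predictive $\Delta_R^\pi$ need not vanish on the diagonal: $\Delta^\pi_R(s,s)=\mathbb{E}|\varepsilon-\varepsilon'|>0$. Hence $d_n$ is not a pseudometric, and I will rely on the primal coupling argument, which needs only nonnegativity of $d_n$ and measurability. The second delicate point is justifying (ii) and (iii). I will state them explicitly as hypotheses, noting that (ii) holds at the optimum of $\mathcal{L}_{\text{rew}}$ with $\mu_\theta=\mathbb{E}[r\mid s,a]$. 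If the model is inexact, I would instead prove the bound with an additive error $\frac{1}{1-\gamma}\big(\|r-\mu_\theta\|_\infty+\gamma\|V^\pi\|_{\mathrm{Lip}}\sup_s \mathcal{W}_1(\mathcal{P}^\pi,\widehat{\mathcal{P}}^\pi)\big)$.
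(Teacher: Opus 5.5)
Your proposal is correct and follows the same route as the paper's proof. That route is: induction on the iterates $V_n$ and $d_n$ started from zero; Jensen's inequality plus unbiasedness $\mathbb{E}[\hat r_\theta(s,a)]=r(s,a)$ for the reward term; the primal coupling bound for the transition term; and passage to the limit via the contraction lemma. Your explicit hypotheses $c_R\ge 1$, $c_T\ge\gamma$ and $\widehat{\mathcal{P}}^\pi=\mathcal{P}^\pi$ are exactly what the paper uses tacitly: it writes the operator with the true kernel $\mathcal{P}^\pi$ and relies on $c_R=1$, $c_T=\gamma$ from its hyperparameters, so stating them is a genuine gain in rigor.
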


\begin{proof}
We prove the theorem by induction based on the work of Chen et al. \cite{chen2022learning}. We assume the predictive reward random variable $\hat r_\theta(s,a)$ satisfies
$\mathbb{E}[\hat r_\theta(s,a)]=r(s,a)$ (equivalently, $\hat r_\theta(s,a)=r(s,a)+\varepsilon$ with $\mathbb{E}[\varepsilon|s,a]=0$),
and we use the transition $\mathcal{P}^\pi(\cdot|s)=\mathbb{E}_{a\sim\pi(\cdot|s)}\mathcal{P}(\cdot|s,a)$. 

Firstly, we provide the following Bellman and metric iterate forms, as well as a key inequality.  We define the standard Bellman evaluation operator with the environment reward $r(s,a)$,
\begin{align}
(\mathcal{T}^\pi V)(s)
&:=~
\mathbb{E}_{a\sim\pi(\cdot|s)}\!\big[r(s,a)\big]
~+~
\gamma~\mathbb{E}_{s'\sim \mathcal{P}^\pi(\cdot|s)}\!\big[V(s')\big],
\end{align}
with $V_0^\pi(\cdot)\equiv 0$ and $V_{n+1}^\pi=\mathcal{T}^\pi V_n^\pi$.

We consider the predictive bisimulation operator defined in Eq. (\ref{eq:predictive_operator}):
\begin{align}
(\mathcal{T}_b^\pi d)(s_i,s_j)
&:=~
c_R\Delta_R^\pi(s_i,s_j)
~+~
c_T~\mathcal{W}_1(d)\!\left(\mathcal{P}^\pi(\cdot|s_i),\mathcal{P}^\pi(\cdot|s_j)\right),
\end{align}
where
\begin{align}
\Delta_R^\pi(s_i,s_j)
&:=~
\mathbb{E}_{a_i\sim\pi(\cdot|s_i),~a_j\sim\pi(\cdot|s_j)}
\Big[
\big|\hat r_\theta(s_i,a_i)-\hat r_\theta(s_j,a_j)\big|
\Big],
\end{align}
with $d_0\equiv 0$ and $d_{n+1}=\mathcal{T}_b^\pi d_n$. By contraction, $V_n^\pi\to V^\pi$ and $d_n\to \tilde d_{\mathrm{pre}}$.

Then, we derive a key inequality for the transition term.
Assume a function $f$ satisfies $|f(x)-f(y)|\le d(x,y)$ for all $x,y$. For any coupling
$\zeta\in\Pi(\mu,\nu)$,
\begin{align}
\Big|\mathbb{E}_{x\sim\mu}[f(x)]-\mathbb{E}_{y\sim\nu}[f(y)]\Big|
&=~
\Big|\mathbb{E}_{(x,y)\sim\zeta}\big[f(x)-f(y)\big]\Big| \\\nonumber
&\le~
\mathbb{E}_{(x,y)\sim\zeta}\big|f(x)-f(y)\big| \\\nonumber
&\le~
\mathbb{E}_{(x,y)\sim\zeta} d(x,y).
\end{align}
Taking $\inf_{\zeta\in\Pi(\mu,\nu)}$ yields,
\begin{align}
\Big|\mathbb{E}_{x\sim\mu}[f(x)]-\mathbb{E}_{y\sim\nu}[f(y)]\Big|
~\le~
W_1(d)(\mu,\nu).
\label{eq:KR_short}
\end{align}

Secondly, we prove by induction that for all $n\ge 0$ and all $s_i,s_j$,
\begin{align}
\big|V_n^\pi(s_i)-V_n^\pi(s_j)\big|
~\le~
d_n(s_i,s_j).
\label{eq:ind_claim_v}
\end{align}
The case $n=0$ holds since both sides are $0$. 

Assume \eqref{eq:ind_claim_v} holds for some $n$. Then we unfold:
\begin{align}
\big|V_{n+1}^\pi(s_i)-V_{n+1}^\pi(s_j)\big|
&=~
\Big|(\mathcal{T}^\pi V_n^\pi)(s_i)-(\mathcal{T}^\pi V_n^\pi)(s_j)\Big| \\\nonumber
&=~
\Big|
\mathbb{E}_{a_i}[r(s_i,a_i)]
-
\mathbb{E}_{a_j}[r(s_j,a_j)]
+
\gamma\Big(\mathbb{E}_{s_i'}[V_n^\pi(s_i')]-\mathbb{E}_{s_j'}[V_n^\pi(s_j')]\Big)
\Big| \\\nonumber
&\le~
\Big|\mathbb{E}_{a_i}[r(s_i,a_i)]-\mathbb{E}_{a_j}[r(s_j,a_j)]\Big|
+
\gamma\Big|\mathbb{E}_{s_i'}[V_n^\pi(s_i')]-\mathbb{E}_{s_j'}[V_n^\pi(s_j')]\Big|.
\label{eq:split_main}
\end{align}
For the first term, using $\mathbb{E}[\hat r_\theta(s,a)]=r(s,a)$ and the property of $|\mathbb{E}[X]|\le \mathbb{E}[|X|]$, we have,
\begin{align}
\Big|\mathbb{E}_{a_i}[r(s_i,a_i)]-\mathbb{E}_{a_j}[r(s_j,a_j)]\Big|
&=~
\Big|\mathbb{E}_{a_i}\mathbb{E}[\hat r_\theta(s_i,a_i)]-\mathbb{E}_{a_j}\mathbb{E}[\hat r_\theta(s_j,a_j)]\Big| \\\nonumber
&=~
\Big|\mathbb{E}_{a_i,a_j}\mathbb{E}\big[\hat r_\theta(s_i,a_i)-\hat r_\theta(s_j,a_j)\big]\Big| \\\nonumber
&\le~
\mathbb{E}_{a_i,a_j}\mathbb{E}\Big[\big|\hat r_\theta(s_i,a_i)-\hat r_\theta(s_j,a_j)\big|\Big] \\
&=~
\Delta_R^\pi(s_i,s_j).
\label{eq:reward_bound}
\end{align}
For the second term, the induction hypothesis implies $|V_n^\pi(x)-V_n^\pi(y)|\le d_n(x,y)$ for all $x,y$, hence applying
\eqref{eq:KR_short} with $f=V_n^\pi$, $d=d_n$, $\mu=\mathcal{P}^\pi(\cdot|s_i)$ and $\nu=\mathcal{P}^\pi(\cdot|s_j)$ gives that,
\begin{align}
\Big|\mathbb{E}_{s_i'}[V_n^\pi(s_i')]-\mathbb{E}_{s_j'}[V_n^\pi(s_j')]\Big|
~\le~
\mathcal{W}_1(d_n)\!\left(\mathcal{P}^\pi(\cdot|s_i),\mathcal{P}^\pi(\cdot|s_j)\right).
\label{eq:trans_bound}
\end{align}
Substituting \eqref{eq:reward_bound} and \eqref{eq:trans_bound} into \eqref{eq:split_main}, we have,
\begin{align}
\big|V_{n+1}^\pi(s_i)-V_{n+1}^\pi(s_j)\big|
&\le~
c_R\Delta_R^\pi(s_i,s_j)
+
c_T~\mathcal{W}_1(d_n)\!\left(\mathcal{P}^\pi(\cdot|s_i),\mathcal{P}^\pi(\cdot|s_j)\right) =
d_{n+1}(s_i,s_j),
\end{align}
so \eqref{eq:ind_claim_v} holds for $n+1$.

Finally, letting $n\to\infty$ in \eqref{eq:ind_claim_v} and using $V_n^\pi\to V^\pi$ and $d_n\to \tilde d_{\mathrm{pre}}$ yields
\begin{align}
\big|V^\pi(s_i)-V^\pi(s_j)\big|
~\le~
\tilde d_{\mathrm{pre}}(s_i,s_j),
\end{align}
which proves \eqref{eq:vdb}.
\end{proof}

\subsection{The Proof the Policy Invariance of Predicted Bisimulation Shaping}\label{Appendix_a4}
\begin{theorem}[Policy Invariance of Metric-based Shaping]\label{thm:a4}
Let $\mathcal{M} = (\mathcal{S}, \mathcal{A}, \mathcal{P}, r, \gamma)$ denote the original MDP, and consider the modified reward function $r'=r+\eta\, F(s,a,a')$, where $F$ is defined in Eq \eqref{eq:intrinsic_bonus}. Then the modified MDP $\mathcal{M}' = (\mathcal{S}, \mathcal{A}, \mathcal{P}, r', \gamma)$ with the metric-based potential preserves the optimal policy of $\mathcal{M}$ by,
\begin{equation}
Q_{\mathcal{M}'}^{*}(s,a) = Q_{\mathcal{M}}^{*}(s,a) - \eta\,\Phi(s), 
\quad \forall s \in \mathcal{S}.
\end{equation}
\end{theorem}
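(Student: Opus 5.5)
This is the standard potential-based shaping argument of Ng et al.\ \cite{Andrew1997policy}. The one structural fact needed is that $\Phi$ is a fixed, bounded real-valued function of the state alone. To secure this, I first fix the frozen parameters $\bar\omega,\bar\theta$ and the anchor $(r_\star,z'_\star)$, and take the predicted reward inside $\Phi$ in expectation (or as a fixed sample). This makes $\Phi(s)=d_{\text{pre}}(s,s_\star;\bar\omega)$ a deterministic map $\mathcal{S}\to\mathbb{R}_{\ge 0}$. Boundedness follows from compactness of $\mathcal{S}$, or alternatively from the diameter bound of Theorem~\ref{thm:53} together with continuity of the networks. Under this convention the shaping term $F(s,a,s')=\gamma\Phi(s')-\Phi(s)$ has exactly the potential-difference form.

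Next I write the Bellman optimality equation of $\mathcal{M}$:
\begin{equation*}
Q^*_{\mathcal{M}}(s,a)=\mathbb{E}_{s'\sim\mathcal{P}(\cdot|s,a)}\big[r(s,a)+\gamma\max_{a'}Q^*_{\mathcal{M}}(s',a')\big].
\end{equation*}
I subtract $\eta\Phi(s)$ from both sides and add and subtract $\eta\gamma\Phi(s')$ inside the expectation. This gives
\begin{equation*}
Q^*_{\mathcal{M}}(s,a)-\eta\Phi(s)=\mathbb{E}_{s'}\big[r(s,a)+\eta(\gamma\Phi(s')-\Phi(s))+\gamma\max_{a'}\big(Q^*_{\mathcal{M}}(s',a')-\eta\Phi(s')\big)\big].
\end{equation*}
The step that makes this work is that $\Phi(s')$ does not depend on $a'$, so it can be pulled through the $\max$. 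The display says that $\hat Q(s,a):=Q^*_{\mathcal{M}}(s,a)-\eta\Phi(s)$ satisfies the Bellman optimality equation of $\mathcal{M}'$ with reward $r'=r+\eta F$.

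The Bellman optimality operator of $\mathcal{M}'$ is a $\gamma$-contraction in sup-norm on bounded functions, since $r'$ is bounded because $\Phi$ is. Its fixed point is therefore unique, so $\hat Q=Q^*_{\mathcal{M}'}$, which is the claimed identity. Policy invariance then follows immediately: $\eta\Phi(s)$ is constant in $a$, so $\arg\max_a Q^*_{\mathcal{M}'}(s,a)=\arg\max_a Q^*_{\mathcal{M}}(s,a)$.

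I expect the main obstacle to be the first step, not the algebra. As written in Eq.~\eqref{eq:potential}, $\Phi$ depends on an action $a_t$, on a sampled Gaussian reward, and on a batch-dependent anchor. Any of these would break the cancellation, because it requires the same $\Phi(s')$ to appear in $F$ and in the next-state term. I will handle this by stating explicitly that during the argument the potential is frozen and taken in expectation over $a\sim\pi(\cdot|s)$ and over the predictor noise, which makes it a genuine state function. I will also remark that if $\Phi$ drifts during training, the result holds for each fixed snapshot. For episodic tasks with terminal states, I will additionally set $\Phi(\text{terminal})=0$, as in Ng et al.
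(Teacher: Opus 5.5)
Your proposal is correct and follows the paper's proof essentially line for line. You subtract $\eta\Phi(s)$ from the Bellman optimality equation of $\mathcal{M}$, pull $\eta\Phi(s')$ through the $\max_{a'}$, and use uniqueness of the fixed point of the Bellman optimality operator of $\mathcal{M}'$ to identify $Q^*_{\mathcal{M}}-\eta\Phi$ with $Q^*_{\mathcal{M}'}$. Your preliminary step of freezing $\Phi$ into a bounded, deterministic function of the state alone makes explicit what the paper assumes silently. Note that boundedness should come from compactness plus continuity rather than from Theorem~\ref{thm:53}, which only controls an expected diameter on $\mathcal{S}_0$.
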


\begin{proof}
Given the potential shaping $F(s_t,a,s')= \gamma\,\Phi(s') - \Phi(s)$ and modified reward $r'(s,a,s')=r(s,a,s')+\eta F(s,a,s')$. According to the Bellman optimality equation, the optimal action-value function of the original MDP $\mathcal{M}$ has the form,
\begin{equation}
\label{eq:bellman_Q_M}
Q^{\ast}_{\mathcal{M}}(s,a)
=
\mathbb{E}_{s'\sim \mathcal{P}(\cdot|s,a)}
\Big[
r(s,a,s')
+
\gamma\max_{a'} Q^{\ast}_{\mathcal{M}}(s',a')
\Big].
\end{equation}
We can obtain the following by subtracting the potential function from both sides:
\begin{align}
Q^{\ast}_{\mathcal{M}}(s,a)-\eta\Phi(s)
&=
\mathbb{E}_{s'\sim \mathcal{P}(\cdot|s,a)}
\Big[
r(s,a,s')
+
\gamma\max_{a'} Q^{\ast}_{\mathcal{M}}(s',a')
\Big]
-\eta\Phi(s) \nonumber\\\nonumber
&=
\mathbb{E}_{s'\sim \mathcal{P}(\cdot|s,a)}
\Big[
r(s,a,s')
+
\eta(\gamma\Phi(s')
-\Phi(s))
+
\gamma\max_{a'}\big(Q^{\ast}_{\mathcal{M}}(s',a')-\eta\Phi(s')\big)
\Big] \\\nonumber
&=
\mathbb{E}_{s'\sim \mathcal{P}(\cdot|s,a)}
\Big[
r(s,a,s')+F(s,a,s')
+
\gamma\max_{a'}\big(Q^{\ast}_{\mathcal{M}}(s',a')-\eta\Phi(s')\big)
\Big] \\
&=
\mathbb{E}_{s'\sim \mathcal{P}(\cdot|s,a)}
\Big[
r'(s,a,s')
+
\gamma\max_{a'}\big(Q^{\ast}_{\mathcal{M}}(s',a')-\eta\Phi(s')\big)
\Big].
\label{eq:subtract_phi_Q}
\end{align}
where we use that $-\eta\Phi(s)$ is independent of $s'$ and
\begin{align}
\gamma\max_{a'} Q^{\ast}_{\mathcal{M}}(s',a')
&=
\gamma\max_{a'}\Big(\big(Q^{\ast}_{\mathcal{M}}(s',a')-\eta\Phi(s')\big)+\eta\Phi(s')\Big)
=
\gamma\eta\Phi(s')+\gamma\max_{a'}\big(Q^{\ast}_{\mathcal{M}}(s',a')-\eta\Phi(s')\big).
\end{align}

Then, combining Eq. \eqref{eq:subtract_phi_Q} with $\hat Q(s,a)\triangleq Q^{\ast}_{\mathcal{M}}(s,a)-\eta\Phi(s)$, we have,
\begin{equation}
\label{eq:bellman_Q_Mprime}
\hat Q(s,a)
=
\mathbb{E}_{s'\sim \mathcal{P}(\cdot|s,a)}
\Big[
r'(s,a,s')
+
\gamma\max_{a'} \hat Q(s',a')
\Big],
\end{equation}
which is exactly the Bellman optimality equation for $Q^{\ast}_{\mathcal{M}'}$.
By uniqueness of the fixed point, $\hat Q(s,a)=Q^{\ast}_{\mathcal{M}'}(s,a)$, i.e.,
\begin{equation}
Q^{\ast}_{\mathcal{M}'}(s,a)
=Q^{\ast}_{\mathcal{M}}(s,a)-\eta\Phi(s).
\end{equation}
Since the shift term $-\eta\Phi(s)$ does not depend on $a$, the maximizing action is unchanged for every $s$,
so $\mathcal{M}'$ preserves the optimal policy of $\mathcal{M}$.
\end{proof}

\section{Extended Preliminaries}
\subsection{Intrinsic exploration}
Exploration is a key bottleneck in sparse-reward tasks, where extrinsic feedback provides little guidance for discovering reward-relevant trajectories. A common approach augments the environment reward with an intrinsic bonus, i.e.,
$r'_t = r_t + \eta\,r_t^{\mathrm{int}}$, where $\eta>0$ controls the contribution of intrinsic motivation. Prior work designs $r_t^{\mathrm{int}}$ from novelty signals, such as information gain, pseudo-count measures, or prediction error in forward models and disagreement among ensembles. While effective in some domains, especially in low-dimensional domains, these can be dominated by task-irrelevant visual factors, leading to inefficient exploration.

\subsection{Backbone framework}
In MetaWorld experiments, our method and most of visual RL baselines are implemented on top of the off-policy DrQ-v2 framework, which has become a widely adopted backbone for pixel-based continuous-control benchmarks. Given an encoder $\phi_\omega$, DrQ-v2 trains a double-Q critic $Q_{\vartheta_k}$ using an $n$-step bootstrapped TD loss:
\begin{equation}
\mathcal{L}_Q(\vartheta_k,\phi_\omega)=
\mathbb{E}_{\mathcal{D}}\!\left[
Q_{\vartheta_k}(\phi_\omega(s_t),a_t)-
\Big(\sum_{i=0}^{n-1}\gamma^i r_{t+i}
+\gamma^n \min_{k=0,1} Q^{\mathrm{tgt}}_{\bar{\vartheta}_k}(\phi_\omega(s_{t+n}),a_{t+n})\Big)
\right]^2,
\end{equation}
where $Q^{\mathrm{tgt}}_{\bar{\vartheta}_k}$ is a slowly-updated target critic (EMA). The actor $\pi_\nu$ is optimized to maximize the critic value,
\begin{equation}
\mathcal{L}_\pi(\nu)=-
\mathbb{E}_{\mathcal{D}}\!\left[
\min_{k=0,1} Q_{\vartheta_k}\big(\phi_\omega(s_t),\,\pi_\nu(\phi_\omega(s_t))+\epsilon(t)\big)
\right],
\end{equation}
with exploration noise $\epsilon(t)\sim \mathrm{clip}(\mathcal{N}(0,\sigma_t^2))$. DrQ-v2 further improves sample efficiency by applying lightweight image augmentations, while typically updating $\phi_\omega$ only through the critic.

\section{Experimental Details}

\subsection{MetaWorld Tasks} \label{Appendix_c1}
\begin{figure*}[ht]
    \centering
    \includegraphics[width=0.999\linewidth]{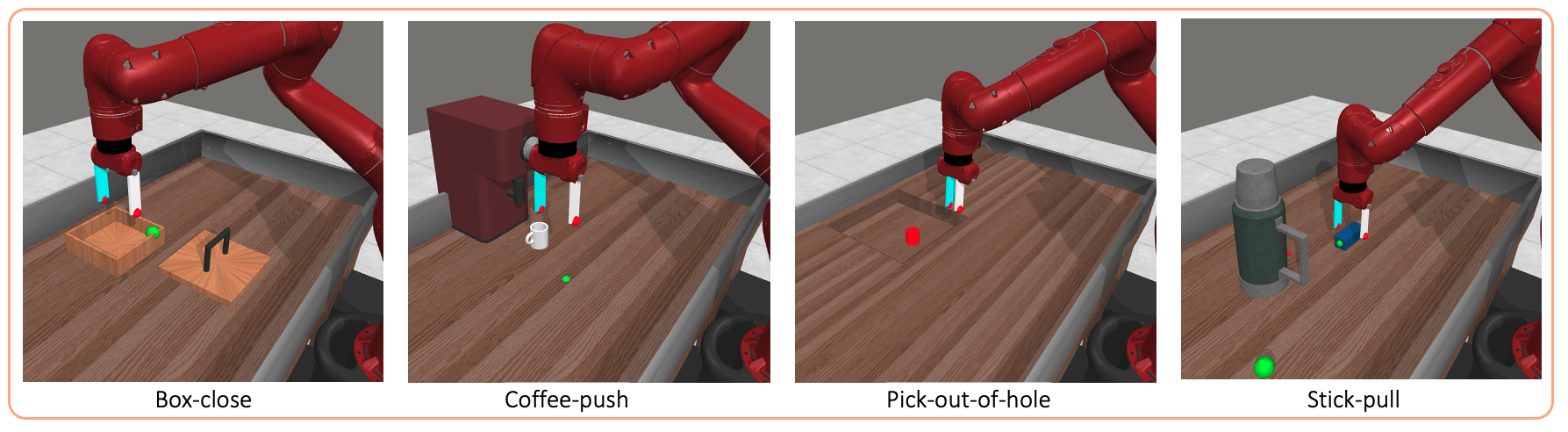}
    \caption{Example vision tasks in the MetaWorld environment.}
    \label{fig:mw_vis}
\end{figure*}
In Figure \ref{fig:mw_vis}, we show four representative manipulation tasks from the Meta-World benchmark~\cite{yu2020meta}: \textit{\textit{Box-close}}, \textit{\textit{Stick-pull}}, \textit{\textit{Pick-out-of-hole}}, and \textit{\textit{Coffee-push}}. These tasks share a common structure in requiring precise, multi-stage object interactions, while differing in contact patterns and control objectives.

\begin{itemize}
    \item \textbf{\textit{Box-close}} requires the agent to grasp or push a box lid and rotate it until the box is fully closed, demanding accurate spatial alignment and force application.
    \item \textbf{\textit{Stick-pull}} involves grasping a stick and pulling an object toward a target region, emphasizing directional control through tool-mediated interaction.
    \item \textbf{\textit{Pick-out-of-hole}} focuses on extracting an object from a cavity, which requires coordinated grasping and vertical lifting under tight spatial constraints.
    \item \textbf{\textit{Coffee-push}} asks the agent to push a mug to a designated goal position, relying on sustained contact and long-horizon pushing behavior.
\end{itemize}

Due to the use of high-dimensional pixel-based observations, intrinsic reward mechanisms like prediction error or novelty estimation can become unreliable. Agents may be misled by minor visual fluctuations—such as lighting changes, texture noise, or background shifts, which artificially inflate intrinsic reward signals. As a result, intrinsic motivation methods (e.g., RND or ICM) may overestimate the novelty of visually altered yet semantically identical states. This sensitivity to visual distractions degrades the stability of exploration and hinders efficient policy learning in sparse reward environments.

\subsection{MetaWorld Experiments} \label{Appendix_c2}
\begin{table*}[t]
\centering
\caption{We report the best episode success rate of TEB and baselines with mean $\pm$ std on MetaWorld tasks.}
\label{table2}
\resizebox{\linewidth}{!}{
\begin{tabular}{lcccccc}
\toprule
Methods & \textit{Pick-out-of-hole} & \textit{Pick-place} & \textit{Push-back} & \textit{Box-close} & \textit{Stick-pull} & \textit{Coffee-push} \\
\midrule
ICM & 0.761 $\pm$ 0.045 & 0.955 $\pm$ 0.014 & 0.000 $\pm$ 0.000 & \textbf{0.973} $\pm$ 0.013 & 0.715 $\pm$ 0.016 & 0.870 $\pm$ 0.019 \\
DrM & 0.599 $\pm$ 0.065 & 0.811 $\pm$ 0.030 & 0.611 $\pm$ 0.024 & 0.900 $\pm$ 0.028 & 0.334 $\pm$ 0.025 & 0.606 $\pm$ 0.046 \\
CeSD & 0.000 $\pm$ 0.000 & 0.267 $\pm$ 0.015 & 0.319 $\pm$ 0.010 & 0.000 $\pm$ 0.000 & 0.000 $\pm$ 0.000 & 0.793 $\pm$ 0.027 \\
LBS & 0.576 $\pm$ 0.022 & 0.873 $\pm$ 0.037 & 0.053 $\pm$ 0.027 & 0.937 $\pm$ 0.027 & 0.517 $\pm$ 0.025 & 0.344 $\pm$ 0.022 \\
RAP & 0.502 $\pm$ 0.041 & 0.776 $\pm$ 0.031 & 0.650 $\pm$ 0.014 & 0.970 $\pm$ 0.014 & 0.627 $\pm$ 0.013 & 0.791 $\pm$ 0.024 \\
CTRL-SR & 0.000 $\pm$ 0.000 & 0.470 $\pm$ 0.039 & 0.361 $\pm$ 0.070 & 0.919 $\pm$ 0.038 & 0.313 $\pm$ 0.010 & 0.838 $\pm$ 0.033 \\
\midrule
TEB (Ours) & \textbf{0.931} $\pm$ 0.017 & \textbf{0.982} $\pm$ 0.010 & \textbf{0.984} $\pm$ 0.012 & 0.968 $\pm$ 0.016 & \textbf{0.878} $\pm$ 0.022 & \textbf{0.917} $\pm$ 0.023 \\
\bottomrule
\end{tabular}
}
\end{table*}

Table~\ref{table2} reports the best episode success rates, in MetaWorld tasks. In general, TEB achieves the highest performance in most of tasks, indicating that the proposed metric-based exploration not only accelerates learning, but also achieves a stronger final performance. The gains are most pronounced on harder manipulation tasks where sparse feedback and long-horizon coordination matter: on \textit{Pick-out-of-hole}, TEB reaches $93.1\%$, outperforming the strongest baseline ICM ($76.1\%$) while CeSD and CTRL-SR fail to solve the task; on \textit{Push-back}, TEB attains $98.4\%$, exceeding RAP ($65.0\%$) and DrM ($61.1\%$) by a large margin. TEB also remains consistently strong on \textit{Stick-pull} and \textit{Coffee-push}. The only exception is \textit{Box-close}, where ICM slightly leads ($97.3\%$ vs.\ $96.8\%$), suggesting that when the task provides relatively easier success feedback, generic curiosity can match peak returns. Notably, the small standard deviations of TEB across all tasks indicate stable convergence.

\subsection{Maze2D Experiments}     \label{Appendix_c3}

\begin{figure*}[ht]
\vspace{-10pt}
    \centering
    \includegraphics[width=\linewidth]{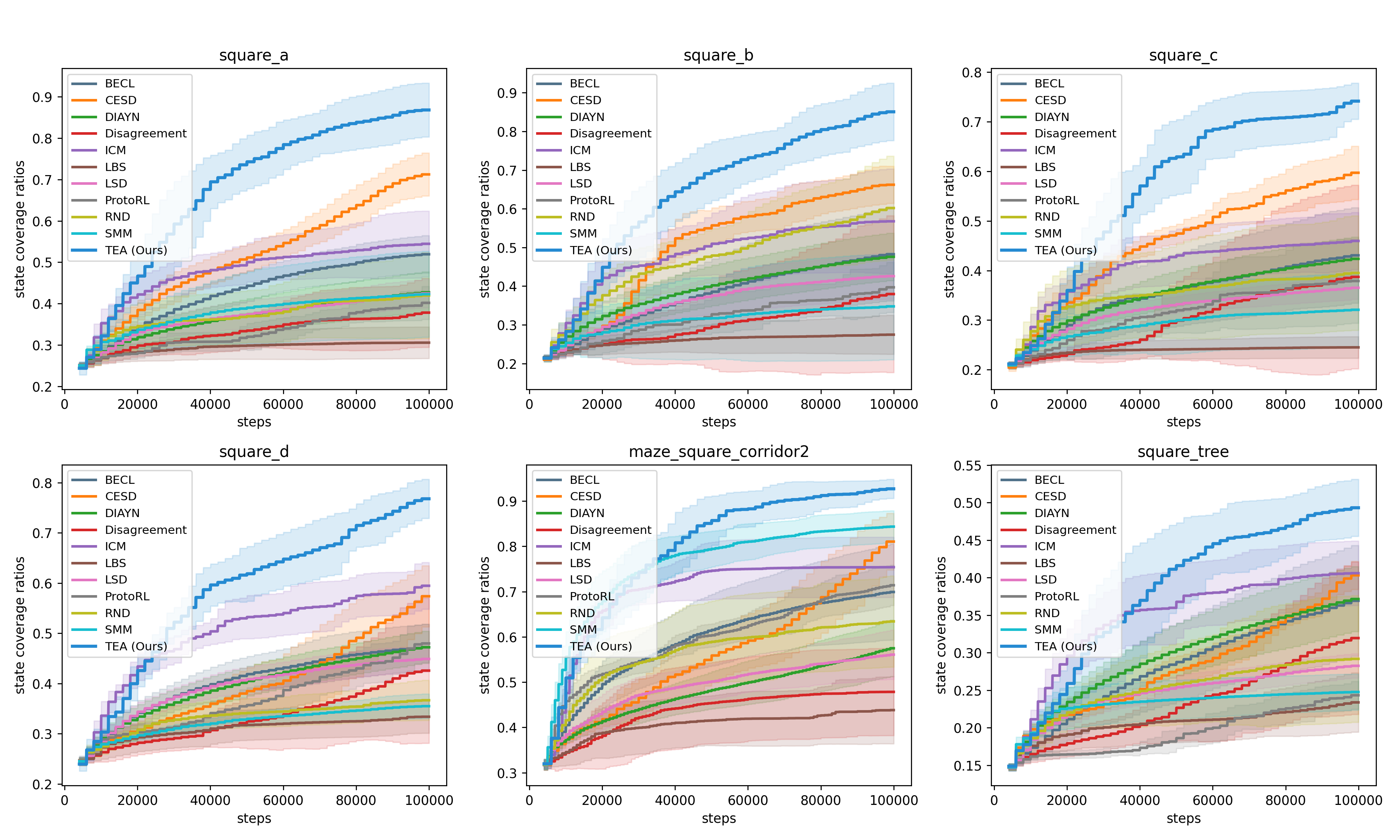}
    \caption{Comparison of the learning curves of TEB and baselines in Maze environments with 100k training steps. Each experiment runs with 10 random seeds, with shaded region representing the standard deviation across seeds.}
    \label{fig:maze_coverage}
\end{figure*}

\subsection{Learning Curves in Maze2D}\label{full_maze_learning}
As shown in Figure \ref{fig:maze_coverage}, we also present the learning curves among TEB and the exploration baselines over 100K training steps. We can see that TEB (blue line) significantly outperforms all baselines in both learning speed and maximum exploration coverage across Maze2D tasks, especially in \textit{Square\_a} and \textit{Square\_a}. Furthermore, we observe that most baselines, such as LBS and RND, converge early and cease generating valuable transitions, while also achieving low exploration capabilities. CeSD, a promising method, does not converge in most tasks, remaining in stable exploration, but its efficiency is relatively low. Overall, the curve comparisons strongly demonstrate that the exploration bonus based on the predictive bisimulation metric yields the best exploration abilities.

\subsection{Exploration Visualizations in Maze2D}\label{full_maze_vis}
In Figure \ref{fig:vis_maze}, we show more visualizations of state coverage in Maze2D tasks.
\begin{algorithm}[t]
\caption{\textbf{TEB}: Task-Aware
Exploration With a Predictive Bisimulation Metric.}
\begin{algorithmic}\label{alg:teb}
\STATE Initialize encoder $\phi_\omega$ and Replay buffer $\mathcal{D}$, policy $\pi$, transition model $\widehat{\mathcal{P}}$, reward predictor $p_\theta$ and et al.
\FOR{$m \gets 1$ to (\# epochs)}
    \FOR{$i \gets 1$ to (\# steps per epoch)}
        \STATE Encode state $z_t=\phi_\omega(s_t)$
        \STATE Execute action $a_t \sim\pi_\phi(z_t)+\epsilon(t)$ where $ \epsilon(t)\sim\mathcal{N}(0,\sigma_t^2)$.
        \STATE Run a step in environments $s_{t+1}\sim\mathcal{P}(\cdot|s_t,a_t)$
        \STATE Collect data $\mathcal{D}\gets \mathcal{D}\cup\{s_t,a_t,r_{t+1},s_{t+1}, done\}$
    \ENDFOR
    \FOR{$g\gets 1$ to (\# gradient steps per epoch)}
        \STATE Sample batch $\mathcal{B}_i\sim\mathcal{D}$ and encode $z=\phi_\omega(s)$, $z'=\phi_{\omega}(s')$
        \STATE Update reward predictor by minimizing $\mathcal{L}_{\text{rew}}$ in Eq.~\eqref{eq:reward_nll}
        \STATE Update transition model $\widehat{\mathcal{P}}$ by minimizing $\mathcal{L}_{\text{dyn}}$
        \STATE Rearrange batch $\mathcal{B}_j=\textsc{Rearrange}(\mathcal{B}_i)$
        \STATE Compute predictive reward discrepancy $\Delta_R^\pi$ via Eq.~\eqref{eq:reward_discrepancy}
        \STATE Compute predictive bisimulation target $y=\mathrm{sg}\!\left[(\widehat{\mathcal{T}}^\pi \hat d_{\bar\omega})(s_i,s_j)\right]$ via Eq.~\eqref{eq:predictive_operator}
        \STATE Update encoder by minimizing $\mathcal{L}_{\text{bisim}}$ in Eq.~\eqref{eq:bisim_loss} (and $\mathcal{L}_{\text{rep}}$ if used)
        \STATE Compute pseudo-state anchors (Eq.~(13))
        \STATE Compute potential $\Phi$ and intrinsic bonus ${F}=\gamma\Phi(s_{t+1})-\Phi(s_t)$ (Eq.~\eqref{eq:potential}--\eqref{eq:intrinsic_bonus})
        \STATE Update actor-critic using shaped reward $r'_t=r_t+\eta \,F$
        \STATE Update target critics: $ Q^{\text{tgt}}\leftarrow\tau Q+(1-\tau)Q^{\text{tgt}}$ with $\tau$
    \ENDFOR
\ENDFOR
\end{algorithmic}
\end{algorithm}

\begin{figure*}[t]
    \centering
    \includegraphics[width=0.9\linewidth]{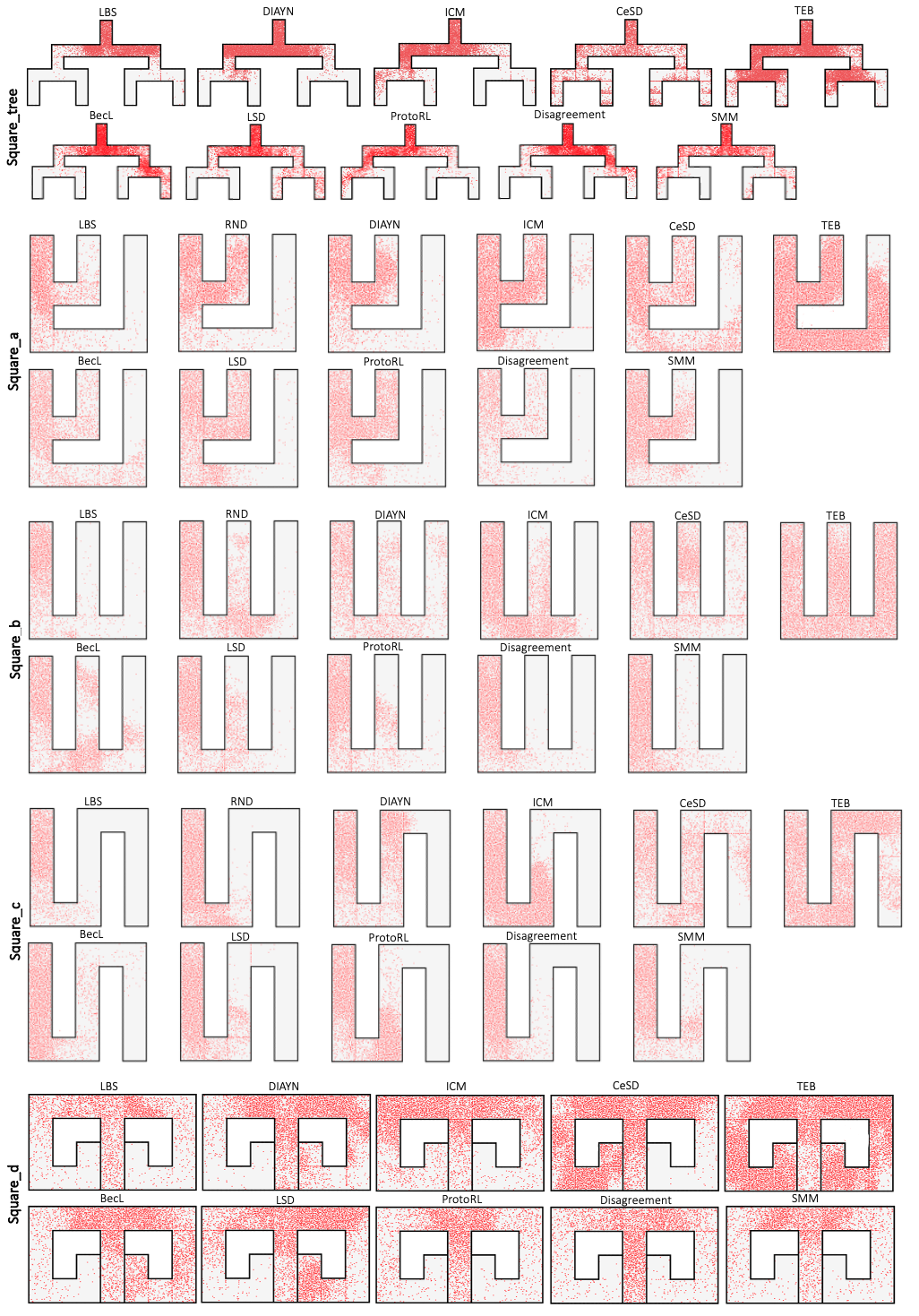}
    \caption{Visualization of state coverage in Maze2D tasks.}
    \label{fig:vis_maze}
\vspace{-30pt}
\end{figure*}

\begin{table}[t]
\setlength{\tabcolsep}{4pt}
\centering
\caption{Detailed hyperparameters.}
\begin{tabular}{l|c|c}
\toprule
\textbf{Shared Hyperparameter} & \textbf{Value in MetaWorld} & \textbf{Value in Maze2D} \\
\midrule
Training steps & $1\sim3$M & 100K \\
Seed frames & 4000 & 4000 \\
Exploration steps & 2000 & 2000 \\
Evaluation episodes & 10 & - \\
Replay buffer capacity & $2\times 10^{5}$ & $2\times 10^{5}$ \\
Episode length & 1000 & 50 \\
Batch size & 256 & 1024 \\
Frame stack & 3 & 1 \\
Discount factor $\gamma$ & 0.97 & 0.99 \\
State dims & $9 \times 84 \times 84$ & $2$ \\
Encoder conv kernels & {[}32,32,32,32{]} & - \\
Encoder conv filter size & ${[}3 \times 3, 3 \times 3, 3 \times 3, 3 \times 3{]}$ & - \\
Encoder conv strides & {[}2,1,1,1{]} & - \\
Hidden dims & 1024 & 1024 \\
Latent state dims & 50 & - \\
Action repeat & 2 & 1 \\
$n$-step returns & 3 & 3 \\
Optimizer & Adam & Adam \\
Learning rate & $1\times 10^{-4}$ & $1\times 10^{-4}$ \\
Soft update rate $\tau$ & 0.01 & 0.01 \\
Gradient training frequency & 2 & 2 \\
Linear exploration stddev. schedule  & Linear(1.0,0.1,500000) & 0.2 \\
Linear exploration stddev. clip  & 0.3 & 0.3 \\

\bottomrule
\end{tabular}
\label{para}
\end{table}

\subsection{Baselines Details in Maze2D Experiments}
\label{appendix:baselines}

We summarize the background of all exploration and skill discovery baselines used in Maze2D experiments.

\textbf{ICM \cite{pathak2017curiosity}} proposes a curiosity-driven exploration mechanism where an intrinsic reward is defined by the prediction error of a forward dynamics model. This method encourages agents to explore transitions that are hard to predict, reflecting novel experiences.

\textbf{RND \cite{burda2018exploration}} utilizes a fixed random target network and a trained predictor. The prediction error serves as an intrinsic reward signal, helping agents detect novel states in a scalable and robust manner.

\textbf{Disagreement \cite{pathak2019self}} estimates explorations based on the disagreement among an ensemble of forward models. High disagreement across the ensemble indicates uncertainty, promoting exploration of under-sampled areas.

\textbf{ProtoRL \cite{yarats2021reinforcement}} integrates prototype representations into RL, encouraging exploration toward underrepresented clusters in the state space. This structured exploration improves sample efficiency and diversity of learned behaviors.

\textbf{LBS \cite{mazzaglia2022curiosity}} applies Bayesian surprise not in parameter space but in a learned latent space of dynamics. This approach is more computationally efficient and more robust to stochastic environments.

\textbf{DIAYN \cite{eysenbach2018diversity}} learns diverse skills in the absence of reward by maximizing mutual information (MI) between skills and the resulting states. Each skill should generate distinguishable behavior trajectories .

\textbf{SMM \cite{lee2019efficient}} aims to flatten the state visitation distribution by minimizing KL divergence between the current state distribution and a target. This entropy-based method improves exploratory coverage.

\textbf{LSD \cite{park2022lipschitz}} imposes Lipschitz constraints on skill learning objectives to enforce local continuity. This helps prevent mode collapse and ensures skills generalize across similar states.

\textbf{BeCL \cite{yang2023behavior}} proposes a contrastive learning objective for skill discovery by treating skill-conditioned trajectories as different views. It maximizes MI between behaviors under the same skill, promoting both diversity and consistency.

\textbf{CeSD \cite{bai2024constrained}} develops an approach where multiple skills explore distinct clusters of state space. It introduces distributional constraints to ensure that skills visit non-overlapping regions, thereby improving coverage and distinguishability.

\section{Method Details}
\subsection{Pseudo-code Algorithm} \label{Algorithm}
We describe the TEB algorithm in Algorithm \ref{alg:teb}.

\subsection{Hyperparameter settings} \label{parameter}
As shown in Table \ref{para}, we provide shared training hyperparameters in MetaWorld and Maze2D. In the Maze experiments, we calculated the state coverage obtained by each algorithm by discretizing the map for each task into tiny bins. Specifically, we set the discrete cell size to 0.1 for \textit{Square\_bottleneck} and \textit{Square\_tree}, and 0.05 for the others. Additionally, details of TEB-specific hyperparameters are provided below. First, the coefficient $c_R=1$ is used for the reward differential term, the coefficient $c_T=\gamma$ within the predictive Bisimulation metric. The variance bounds $\sigma_{min}=1e-4$ and $\sigma_{max}=1.0$ are used to train the Gaussian predictor of the reward. The loss weight for Gaussian predictor training of reward is $\lambda_r=0.5$, the dynamics model is $\lambda_p=1e-4$, and bisimulation metric is $\lambda_b=0.5$. Finally, we set the intrinsic reward weight $\eta=1.0$ for most MetaWorld, except for $\eta=0.05$ for \textit{Stick-pull} and $\eta=10$ for \textit{Push-back}. Since the extrinsic reward has been normalized, we set $\eta$ by initial intrinsic reward below 0.05, so the choice of $\eta$ is not complicated.

\end{document}